%% file: main.tex
\documentclass[11pt]{article}

\ifdefined\XeTeXrevision\else\ifdefined\pdfoutput\pdfoutput=1\fi\fi

\usepackage[letterpaper,margin=1in,headheight=14pt]{geometry}

\usepackage[charter]{mathdesign}
\usepackage[T1]{fontenc}
\usepackage{iftex}
\ifPDFTeX\usepackage[utf8]{inputenc}\fi
\usepackage{microtype}
\usepackage[table]{xcolor}
\definecolor{maintext}{HTML}{1F2430}
\definecolor{accent}{HTML}{0E8F55}
\definecolor{linkaccent}{HTML}{0B7D4A}
\arrayrulecolor{maintext}
\AtBeginDocument{\color{maintext}}

\usepackage{graphicx}
\usepackage{booktabs}
\usepackage{array}
\usepackage{amsmath}
\usepackage{amsthm}
\graphicspath{{figures/}}
\usepackage[font=small,labelfont=bf,skip=8pt,width=0.92\textwidth]{caption}

\numberwithin{equation}{section}
\newtheorem{lemma}{Lemma}[section]
\newtheorem{corollary}[lemma]{Corollary}
\theoremstyle{definition}
\newtheorem{remark}[lemma]{Remark}

\usepackage{titlesec}
\titleformat{\section}{\Large\bfseries\color{maintext}}{\textcolor{accent}{\thesection.}}{0.5em}{}
\titleformat{\subsection}{\large\bfseries\color{maintext}}{\textcolor{accent}{\thesubsection}}{0.5em}{}
\titlespacing*{\section}{0pt}{1.6em}{0.6em}

\usepackage{fancyhdr}
\usepackage[round,authoryear]{natbib}
\usepackage[colorlinks=true,
            linkcolor=linkaccent,
            citecolor=linkaccent,
            urlcolor=linkaccent]{hyperref}
\hypersetup{pdftitle={Post-Training Science for Supervised Fine-Tuning},
            pdfauthor={Charles O'Neill, Mudith Jayasekara, Harry Partridge}}

\newcommand{\affil}[1]{\textsuperscript{#1}}

\begin{document}
\thispagestyle{empty}

\noindent\includegraphics[height=20pt]{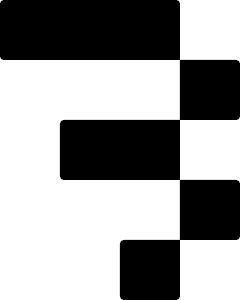}\par
\vspace{1.5em}
{\raggedright\fontsize{23}{27}\selectfont\bfseries
Post-Training Science for Supervised Fine-Tuning\par}

\vspace{1.1em}

{\raggedright\large
\textbf{Charles O'Neill}\affil{1} \quad \textbf{Mudith Jayasekara}\affil{1} \quad \textbf{Harry Partridge}\affil{1}\par}
\vspace{0.25em}
{\raggedright\affil{1}Baseten\par}

\vspace{1.8em}

{\bfseries
Every supervised fine-tuning run forces the same chain of decisions, such as learning rate, batch size, LoRA or full fine-tuning, how many epochs, which optimiser, and what data to feed the model. Each of these is typically rediscovered from scratch for every new model and dataset. Here we measure them under one instrument: a sweep that varies one lever at a time, and spans dense and mixture-of-experts models in two families (Qwen3 and Llama), on four real-world customer SFT datasets, for both LoRA and full fine-tuning. These datasets give a controlled testbed: each task carries an evaluation built with the customer, and its training data is produced by iterative supervised fine-tuning that refines model outputs until they pass that evaluation, so the supervised target is internally consistent and the task judge we report against is the criterion the data was built to satisfy. We ask how the optimal learning rate and batch size move with model scale, family, and data, and whether one selection rule transfers across them; what LoRA trades against full fine-tuning, and how its rank and $\alpha$ set what the adapter can learn; whether validation loss (or other metrics, such as loss landscape flatness) faithfully ranks downstream quality; whether post-training gains scale with model size and data volume, on a model ladder extended through mixtures-of-experts to 235B parameters; how many epochs to train before general instruction-following erodes; and whether a geometry-aware optimiser improves on AdamW. Each recommendation is paired with a measure of its uncertainty.\par}

\vspace{1.6em}

\input{sections/introduction}
\input{sections/qwen_batch_lr}
\input{sections/lora_hparams}
\input{sections/downstream_performance}
\input{sections/scaling_laws}
\input{sections/optimizers}
\input{sections/epochs}
\input{sections/conclusion}

\bibliographystyle{plainnat}
\bibliography{references}

\clearpage

{\small
 \setcounter{tocdepth}{1}
 \hypersetup{linkcolor=maintext}
 \tableofcontents}

\appendix
\titleformat{\section}{\Large\bfseries\color{maintext}}{\textcolor{accent}{Appendix \thesection.}}{0.5em}{}
\input{sections/appendix_timing}
\input{sections/appendix_frontiers}
\input{sections/appendix_hessian}
\input{sections/appendix_optimizers}

\end{document}

%% file: sections/introduction.tex
\section{Introduction}

Adapting an open base model to a task is now the dominant way language models reach production, yet the decisions that govern each supervised fine-tuning run are still made heuristically, as defaults inherited from pretraining, rules tuned on someone else's model, or a fresh ad-hoc sweep per job. Here we aim to replace those ad hoc decisions with controlled measurement on real customer tasks.

We answer one question per section. We sweep dense base models across two families (Qwen3 and Llama 3.1/3.2), plus several mixture-of-experts models as an out-of-architecture test, on four anonymised customer SFT datasets, for both LoRA and full fine-tuning, holding data, splits, and seeds fixed so that only the variable under study changes. A run enters the analysis only if it beat its pretrained baseline on validation NLL under the same masking, and cross-dataset comparisons are reported as improvement over that baseline, which removes the token-load confound of absolute loss (\S\ref{subsec:data}).

The four datasets give a controlled testbed. Each comes from a production task for which we built a reliable evaluation with the customer, and its training examples are generated by iterative supervised fine-tuning \citep{oneill2025isft}: a model drafts an output, an evaluator grades it and explains the failure, and the draft is refined until it passes, so every training pair already scores well under the task's own evaluation. This gives the sweep two properties it would not have on generic data. First, the supervised target is internally consistent, free of contradictory gold answers or label noise, so a change in validation NLL reflects the lever under study and not the data. Second, the criterion the data was built to satisfy is the production judge we score against downstream (\S\ref{sec:downstream}), which the customer validated. Appendix~\ref{app:isft} provides more details.

Section~\ref{sec:foundation} calibrates the two levers every run must set, learning rate and batch size. Across 0.6--32B and both families the optimal LoRA learning rate is flat at $10^{-3}$, roughly $33\times$ the full fine-tuning optimum, and global batch trades loss against cost and has no single best value. The rule transfers unchanged to a held-out 30B mixture-of-experts, which fine-tunes like a median $8.5$B dense model, and holds the monotone size trend when applied to 80B and 235B. Appendix~\ref{app:timing} measures the wall-clock, compute, and memory costs behind the batch frontier.

Section~\ref{sec:lorahp} varies the adapter the calibration held fixed. Rank adds usable capacity up to about $64$ and then plateaus, and $\alpha=32$ is the best scale in every cell, so the $r=64,\alpha=32$ default holds; rank $32$ is the cheaper setting, giving up at most $0.003$ nats.

Section~\ref{sec:downstream} asks when validation loss, the metric every selection here relies on, ranks downstream quality. Within a fixed (model, dataset, recipe) cell it does (within-dataset-standardised Spearman $-0.38$ to $-0.88$), so selection by loss is sound inside a recipe, but it does not transfer across model families, and the flatness of the converged minimum, measured by the Fisher trace, adds nothing reliable at matched loss.

Section~\ref{sec:scaling} asks which post-training resources scale predictably: model size, data counted in examples or in tokens, and the adapter's trainable-parameter budget, and whether LoRA and full fine-tuning scale differently. Its model ladder extends through Qwen3 mixtures-of-experts to 235B total parameters. On the dense ladders the at-defaults loss follows a saturating power law in size, steeper for full fine-tuning than for LoRA, and all three MoEs enter the trend at roughly the geometric mean of their active and total parameters. On Qwen3-4B, 8B, and 14B, tripling fresh examples lowers one-epoch LoRA validation loss in every experiment; the return is mostly dataset-specific rather than size-specific.

The remaining sections measure other components of post-training. Muon carries a narrow version of its pretraining advantage into SFT (\S\ref{sec:optimizers}): at a learning rate about $3\times$ below AdamW's it reaches a marginally lower validation NLL and a flatter minimum on all four datasets, with judged task quality even and more general instruction-following retained on the IFEval dataset. Our epoch study (\S\ref{sec:epochs}) finds validation loss overfits past about two epochs while judged quality does not improve and instruction-following erodes, and it separates repeated passes from fresh data: adding examples lowers loss but does not move the optimal epoch ceiling beyond approximately two.

We make three decisions throughout these experiments. Selections are made on validation NLL, an assumption
\S\ref{sec:downstream} tests. Cross-dataset comparisons use improvement over the
pretrained baseline rather than absolute loss. And with four datasets and at most six sizes per family, we quote magnitudes with uncertainty and treat directions as the reliable findings, with point estimates carrying explicit uncertainty.

%% file: sections/qwen_batch_lr.tex
\section{Learning Rate and Batch Size}
\label{sec:foundation}

Learning rate and batch size are the two decisions that most directly govern the cost and outcome of a supervised fine-tuning run, and the two most often rediscovered from scratch for each new model and dataset. The literature on both is almost entirely about pretraining, so it does not say how the optima move across model scale, family, adapter choice, and data.

For pretraining, the link between learning rate and batch size is established: the linear scaling rule ties learning rate to batch size \citep{goyal2017minibatch}; the gradient noise scale predicts a critical batch size beyond which extra parallelism yields no further step-efficiency \citep{mccandlish2018largebatch}; large sweeps show the optimal learning rate first tracks the batch and then saturates \citep{shallue2019dataparallelism, smith2018increasebatch}; and under maximal-update parametrisation the optimum scales roughly as inverse width \citep{yang2021mup}. For fine-tuning, \citet{thinkingmachines2025lora} report that LoRA's optimal learning rate sits about an order of magnitude above full fine-tuning's and that, tuned correctly, LoRA matches full fine-tuning, at a single scale. Whether these optima hold across model sizes, families, and datasets is untested. We therefore sweep learning rate against global batch size for both adapters across nine models, two families, and four datasets, fit a transferable selection rule, and test it on held-out families and datasets.

\subsection{Experimental setup}

We sweep learning rate against global batch size for nine base models in two families (Qwen3 at 0.6B, 1.7B, 4B, 8B, 14B, and 32B; Llama at 3.2-1B, 3.2-3B, and 3.1-8B)\footnote{We use Qwen3 rather than a more recent Qwen release because the study needs one family spanning a wide size range: the small dense models ($0.6$--$1.7$B) anchor the scaling axis, and the large models (up to $32$B, plus the mixtures-of-experts) must share that family for the size comparison to stay within-family. Qwen3 covers that ladder; the newer releases do not offer the same small-to-large dense range. We pick Llama 3.1/3.2 as the second family because its $1$B, $3$B, and $8$B sizes sit close to Qwen3's, so the cross-family check is at matched scale.} on four anonymised
customer SFT datasets: \textsf{security}, \textsf{leasing}, \textsf{support}, and \textsf{docs}. Every cell trains $5{,}000$ examples for one epoch under fixed seeds, splits, and an
assistant-only label policy. All nine models share identical grids: LoRA is the primary arm (rank 64, $\alpha=32$, all-linear, AdamW) over global batch $\{8,16,32,64\}$ and learning rate $\{3\!\times\!10^{-5},\dots,3\!\times\!10^{-3}\}$; full fine-tuning (FullFT) is a smaller reference arm over batch $\{16,64\}$ and learning rate
$\{3\!\times\!10^{-6},\dots,10^{-4}\}$. We use a cosine decay schedule.

An experiment (what we refer to as a ``cell'') is \emph{eligible} only if it reached a validation NLL strictly below the pretrained baseline under the same masking. $970$ of the $1{,}008$ planned cells qualify, and every excluded cell is a LoRA run at the top learning rate $3\!\times\!10^{-3}$ (\S\ref{subsec:instability}); full coverage and token statistics are in Appendix~\ref{app:data}. We select learning rates by validation NLL and report cross-dataset results as improvement over the pretrained baseline, since absolute NLL is token-confounded (\S\ref{subsec:data}). The ``basin'' of a cell is the set of learning rates within $0.01$ nats of its optimum.

\subsection{The optimal learning-rate law}
\label{subsec:law}

Following \citet{thinkingmachines2025lora}, we fit the optimal learning rate as a power law in model hidden size with a multiplier for LoRA over FullFT,
\begin{equation}
  \mathrm{LR} \;=\; C \cdot M_{\text{tuner}} \cdot
  \left(\frac{2000}{\text{hidden size}}\right)^{\,p + q_{\text{tuner}}},
  \qquad M_{\text{FullFT}}=1,\; q_{\text{FullFT}}=0,
\end{equation}
independently in each family (the reference width $2000$ is arbitrary and only sets the units of $C$). Rather than regress on noisy per-cell argmins, we score a candidate law by the validation loss its predicted learning rates would incur, interpolating each cell's loss-vs-learning-rate curve.

Table~\ref{tab:law} gives the fitted laws. The most interesting finding was the LoRA exponent: $p+q$ is
statistically indistinguishable from zero in both families, so the optimal LoRA learning rate is
\emph{flat at $10^{-3}$ across the entire $0.6$--$32$B range and both families}
(Figure~\ref{fig:law}). The FullFT optimum sits near $3\!\times\!10^{-5}$, an order of magnitude
lower. The remaining parameters are loosely identified: $C$ and the multiplier $M$ trade off along
a flat valley, and the FullFT exponent is a gentle downward drift in Qwen that the data cannot
distinguish from flat.

The selection rule also transfers. A law fit on one family predicts the other family's optimal learning
rates to within $0.004$ nats of its own law (Figure~\ref{fig:law-transfer}), and the
leave-one-dataset-out prediction error is median $0.0$ and 90th-percentile $\approx 0.5$ in
$\log_{10}$ learning rate. The loosely identified multiplier is the wrong quantity to transfer (the
per-cell discrete-best LoRA/FullFT ratio is $\approx\!33\times$ in both families) but the
predicted learning rate transfers. Committing to a single LoRA learning rate of $10^{-3}$
everywhere costs under $0.01$ nats relative to per-cell tuning.

\begin{table}[htbp]
  \footnotesize
  \centering
  \caption{\textbf{The optimal LoRA learning rate is flat in model size.} Fitted
  optimal-learning-rate law per family (loss-weighted fit; 95\% bootstrap confidence intervals in
  brackets, $B=250$ resamples). The LoRA exponent $p{+}q$ brackets zero in both families; $C$ (in
  units of $10^{-5}$) and the LoRA multiplier $M$ trade off in a wide valley and are loosely
  identified, as is the FullFT exponent $p$.}
  \label{tab:law}
  \begin{tabular}{lcccc}
    \toprule
    Family & $C\;(\times 10^{-5})$ & $M_{\text{LoRA}}$ & $p$ (FullFT) & $p{+}q$ (LoRA) \\
    \midrule
    Qwen3 & $3.9$ \;{\footnotesize$[3.0,6.2]$} & $26$ \;{\footnotesize$[16,33]$} & $+0.27$ \;{\footnotesize$[0.00,0.75]$} & $0.00$ \;{\footnotesize$[-0.00,0.03]$} \\
    Llama & $3.0$ \;{\footnotesize$[3.0,3.5]$} & $33$ \;{\footnotesize$[28,33]$} & $+0.00$ \;{\footnotesize$[0.00,0.97]$} & $0.00$ \;{\footnotesize$[-0.02,0.01]$} \\
    \bottomrule
  \end{tabular}
\end{table}

\begin{figure}[htbp]
  \centering
  \includegraphics[width=\linewidth]{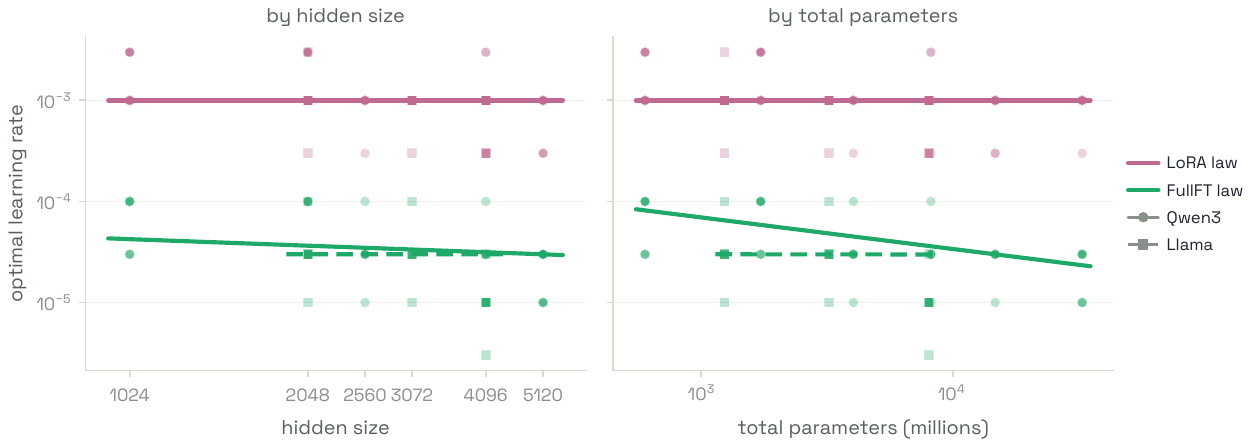}
  \caption{\textbf{The optimal LoRA learning rate is flat at $10^{-3}$ across $0.6$--$32$B in both
  families, an order of magnitude above FullFT.} Per-cell discrete optima (Qwen $\bullet$, Llama
  $\blacksquare$) with the loss-weighted law per tuner and family (solid Qwen, dashed Llama), on
  the hidden-size axis (left) and the total-parameter axis (right). Qwen3-14B and 32B share hidden
  size $5120$, which the parameter axis resolves.}
  \label{fig:law}
\end{figure}

\begin{figure}[htbp]
  \centering
  \includegraphics[width=\linewidth]{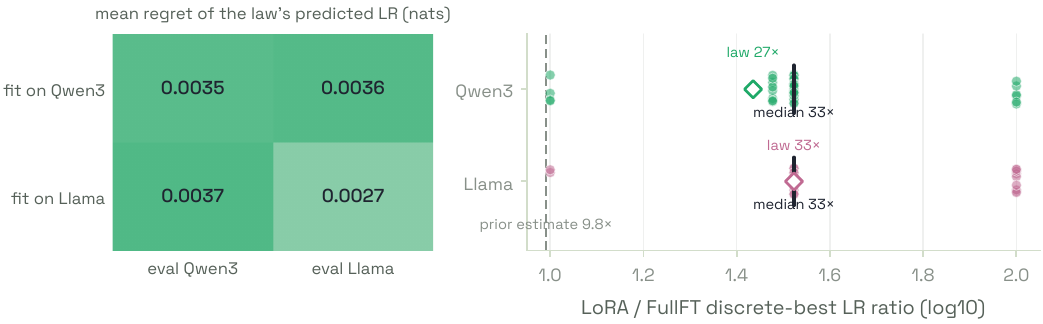}
  \caption{\textbf{The selection rule transfers across families.} Left: the cross-family transfer
  matrix; a law fit on one family incurs almost the same regret evaluated on the other family as
  on its own. Right: one point per matched cell; the median discrete-best LoRA/FullFT ratio is
  $\approx 33\times$ in both families, while the loss-weighted multiplier each law settles on (open
  diamonds) differs because the valley is flat.}
  \label{fig:law-transfer}
\end{figure}

\subsection{LoRA versus full fine-tuning}

Matched on model, dataset, and batch at the shared sizes $\{16,64\}$, FullFT is at or below LoRA on validation NLL in all $72$ comparisons. However, LoRA recovers a median
$98\%$ of FullFT's improvement over baseline (Table~\ref{tab:lora}) while training
$3.1$--$12.6\%$ of the parameters.

Global batch is a decision around both loss and cost. At a fixed one-epoch token budget a smaller batch takes more optimiser steps and reaches lower loss, so ranking by absolute NLL would trivially favour the smallest batch. Since one-epoch wall-clock is nearly batch-invariant and memory is what caps the batch (Appendix~\ref{app:timing}), the cost of a small batch is its optimiser steps. The optimal learning rate is nearly batch-invariant, drifting up only weakly with batch in Qwen ($+0.07$ dex per doubling for the smaller models) and not at all in Llama.

\begin{figure}[htbp]
  \centering
  \includegraphics[width=\linewidth]{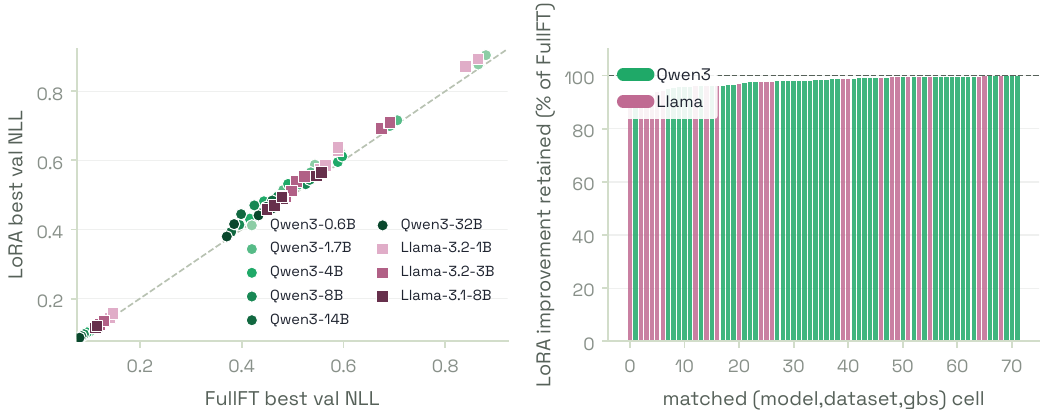}
  \caption{\textbf{FullFT wins all $72$ matched comparisons, but the margins are small.} Left:
  paired best validation NLL at matched settings across all nine models (Qwen $\bullet$, Llama
  $\blacksquare$). Right: the fraction of FullFT's improvement over baseline that LoRA recovers,
  coloured by family.}
  \label{fig:loravsfull}
\end{figure}

\subsection{Dataset composition determines the achievable loss}
\label{subsec:data}

The design fixes examples and epochs but not tokens: the four datasets differ by $18$--$23\times$ in assistant (loss) tokens per example (Appendix~\ref{app:data}). We found that total tokens sets the achievable loss. Within each family the pretrained baseline NLL explains almost none of the final NLL ($R^2\!=\!0.01$ Qwen, $0.02$ Llama) while token load explains most of it ($R^2\!=\!0.58$ and $0.61$): more tokens give worse absolute NLL and smaller improvement over baseline, with identical ordering and sign in both families (Figure~\ref{fig:tokens-floor}). Absolute NLL is therefore not comparable across datasets; we compare improvement over baseline.

A variance decomposition shows the same thing (Figure~\ref{fig:vardecomp}). Dataset identity (here equivalent to token composition) explains $56$--$87\%$ of the variance in final NLL and $69$--$88\%$ in improvement, while global batch and learning rate together explain almost none (each $\le 0.07$).

However, they do change the optimum, albeit slightly. Dataset explains $16$--$19\%$ of the variance in the discrete-best LoRA learning rate ($27$--$33\%$ for FullFT), and the optimum drifts up with token count, $+0.18$ dex per decade for LoRA and $+0.41$ for FullFT after removing model and batch effects, and the drift survives restricting to non-railed cells
(Figure~\ref{fig:tok-optlr}). Over our datasets' $\approx 0.7$-decade token span that is a
$1.3$--$1.4\times$ swing for LoRA (which is small enough that the $10^{-3}$ default stays in the basin everywhere) but a $1.8$--$2.1\times$ swing for FullFT, enough that token-heavy datasets have a higher full fine-tuning optimum.

Best achievable NLL falls monotonically with model size on every dataset in both families, and on a shared total-parameter axis the families interleave: a Qwen and a Llama of similar size reach similar loss (Figure~\ref{fig:scaling-xfamily}). Achievable loss depends on dataset and scale, with no separate family effect.

\begin{figure}[htbp]
  \centering
  \includegraphics[width=\linewidth]{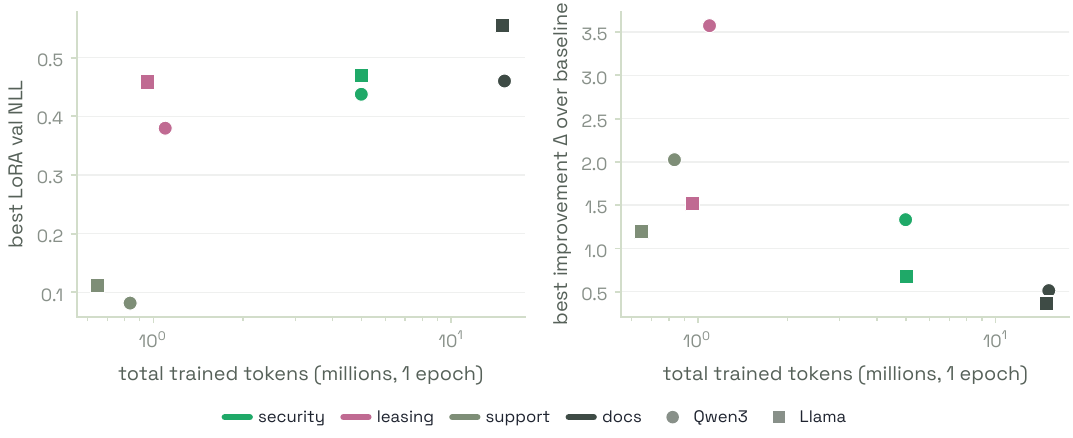}
  \caption{\textbf{Token load sets a per-token loss floor.} More trained tokens give worse
  absolute NLL (left) and smaller improvement over baseline (right) in both families (Qwen
  $\bullet$, Llama $\blacksquare$; colour = dataset). The families sit at different token counts
  because their tokenizers differ, but share the ordering.}
  \label{fig:tokens-floor}
\end{figure}

\begin{figure}[htbp]
  \centering
  \includegraphics[width=\linewidth]{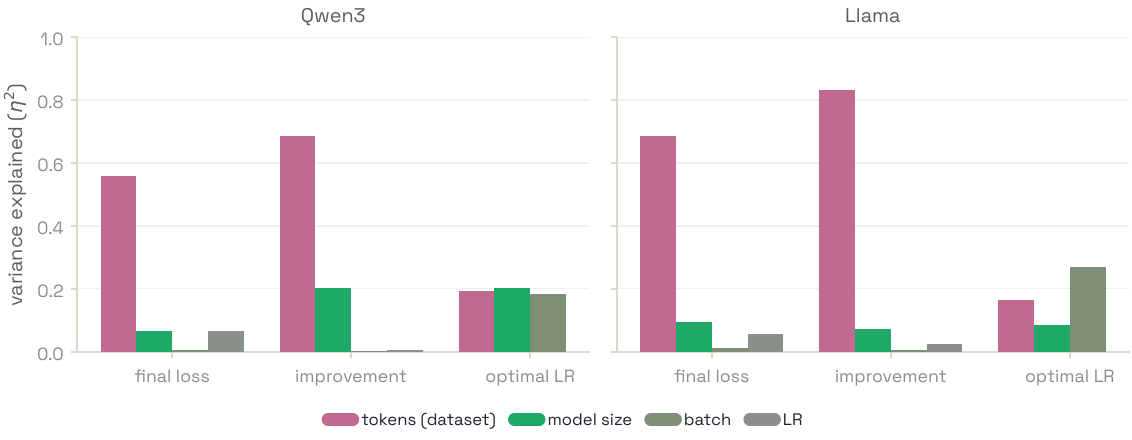}
  \caption{\textbf{Dataset composition dominates the loss; batch and learning rate barely move
  it.} Share of variance ($\eta^2$) in final NLL, improvement over baseline, and discrete-best
  learning rate, attributed to dataset (token composition), model size, global batch, and learning
  rate (LoRA, per family).}
  \label{fig:vardecomp}
\end{figure}

\begin{figure}[htbp]
  \centering
  \includegraphics[width=\linewidth]{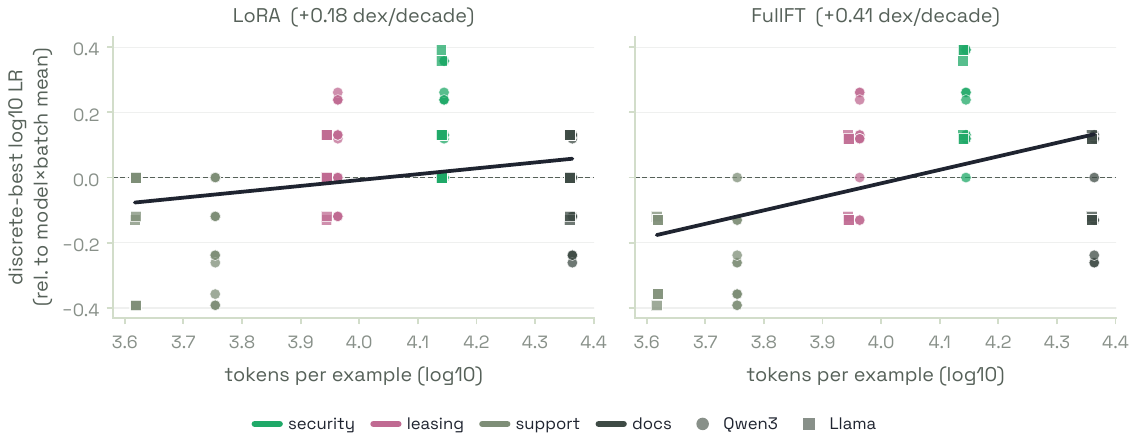}
  \caption{\textbf{The optimal learning rate drifts up with token load, about twice as steeply for
  FullFT as for LoRA.} Discrete-best learning rate, relative to its model$\times$batch mean so
  only the cross-dataset signal remains, against tokens per example, pooled over both families
  (Qwen $\bullet$, Llama $\blacksquare$; colour = dataset), for LoRA (left) and FullFT (right).}
  \label{fig:tok-optlr}
\end{figure}

\begin{figure}[htbp]
  \centering
  \includegraphics[width=\linewidth]{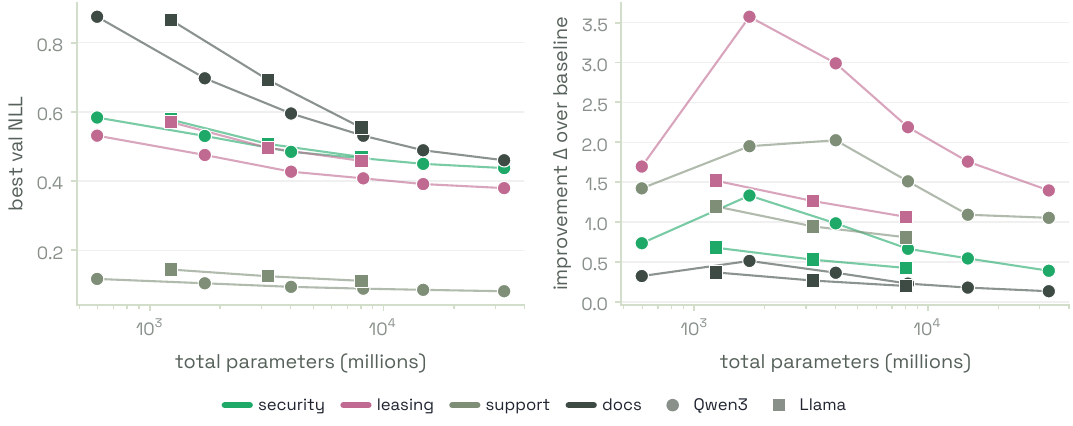}
  \caption{\textbf{The families interleave on a shared parameter axis.} Best loss (left) and
  improvement over baseline (right) against total parameters (Qwen $\bullet$, Llama
  $\blacksquare$; colour = dataset) follow the same within-dataset trend regardless of family.}
  \label{fig:scaling-xfamily}
\end{figure}

\subsection{High-learning-rate instability}
\label{subsec:instability}

The usable learning-rate range is bounded above by stability. Every run whose
validation loss ends materially above its running minimum sits at the top of the grid
($3\!\times\!10^{-3}$), concentrated on the larger models such as Qwen3-4B through 32B and Llama-3.2-3B ($24$ of $682$ eligible LoRA runs; Figure~\ref{fig:instability}). The $38$ excluded cells show the same pattern more strongly: at $3\!\times\!10^{-3}$ the largest Llama, Llama-3.1-8B, fails to beat its baseline in $16$ cells. Larger models have lower optima, so a fixed top-of-grid learning rate sits furthest above them; the usable ceiling is model-dependent, and the failure mode replicates across families.

\begin{figure}[htbp]
  \centering
  \includegraphics[width=0.8\linewidth]{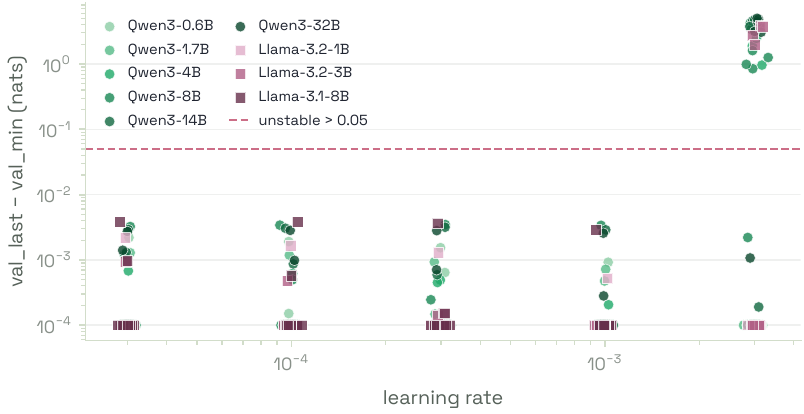}
  \caption{\textbf{Every unstable run sits at the top learning rate, and the larger models
  destabilise first.} End-of-training instability (\texttt{val\_last} $-$ \texttt{val\_min})
  against learning rate for LoRA (colour = model, Qwen $\bullet$ / Llama $\blacksquare$).}
  \label{fig:instability}
\end{figure}

\subsection{Learning-rate schedule}
\label{subsec:lrschedule}

The foundation sweep used cosine decay. To test whether that schedule is merely incidental, we reran Qwen3-8B LoRA on the same $5{,}000$-example, one-epoch setup with the same rank-64,
$\alpha=32$ adapter, but used a constant learning rate. The matched grid covers security and leasing, with global batch $\{8,16,32\}$ and learning rate
$\{10^{-4},3\!\times\!10^{-4},10^{-3},3\!\times\!10^{-3}\}$, comparing each constant run to its cosine-scheduled counterpart.

Constant learning rate is useful only when the chosen learning rate is too low. At $10^{-4}$ it improves all six matched cells by a mean $0.0175$ nats; at $3\!\times\!10^{-4}$ it is effectively tied (mean $0.0019$ nats better, with mixed signs). At the calibrated $10^{-3}$ LoRA default, cosine wins all six cells by a mean $0.0448$ nats. At $3\!\times\!10^{-3}$ the distinction is no longer a small optimisation choice but a stability issue: constant runs end $1.7$--$5.0$ nats above their best checkpoint, and the final loss is worse than cosine by a mean $2.03$ nats even when a mid-training checkpoint briefly looks competitive.

\begin{figure}[htbp]
  \centering
  \includegraphics[width=\linewidth]{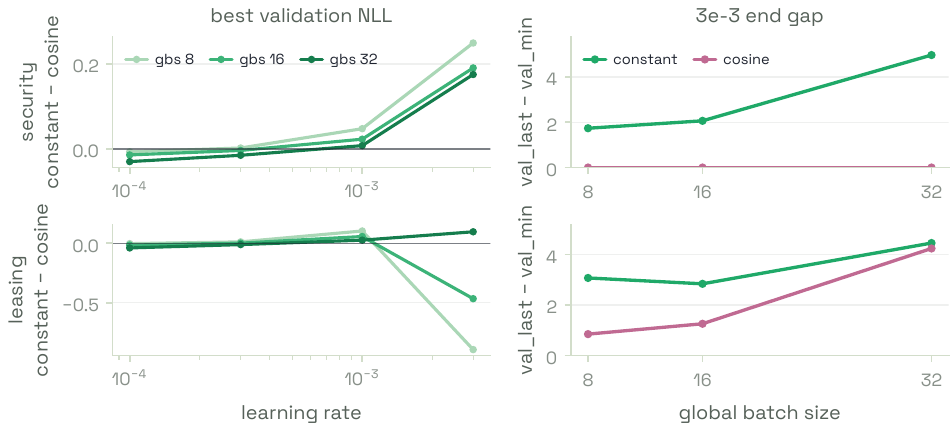}
  \caption{\textbf{Constant learning rate helps below the tuned LR but destabilises the high-LR
  edge.} Left: best-checkpoint validation NLL difference, constant minus cosine, on the two
  complete matched datasets. Negative values favour constant. Right: end-of-training gap
  (\texttt{val\_last} $-$ \texttt{val\_min}) at $3\!\times\!10^{-3}$, where cosine acts as the
  stability guardrail.}
  \label{fig:lr-schedule}
\end{figure}

\subsection{Out-of-architecture transfer: mixtures-of-experts}
\label{subsec:moe}

The results so far are all dense transformers. As a held-out architecture test we apply the rule to three Qwen3 mixtures-of-experts. Qwen3-30B-A3B-Instruct-2507 (hidden size $2048$ as Qwen3-1.7B; 48 layers, 128 experts, 8 active; $30.5$B total, $3.3$B active) gets the full sweep, reusing the datasets, splits, label policy, and LoRA grid, with FullFT over batches $\{16,64\}$ and a grid shifted to $\{3\!\times\!10^{-6},\dots,10^{-4}\}$, and never enters the law fits; $79$ of its $80$ LoRA cells and all $32$ FullFT cells are eligible, and rank-64 all-linear LoRA adapts every expert, training $\approx\!2.6$B parameters ($8.4\%$ of total). The dense selection rule transfers: the flat $10^{-3}$ LoRA rate is the discrete best in $13$ of $16$ (dataset, batch) cells at a mean regret of $0.0011$ nats, below the dense families' own $0.0026$ (Figure~\ref{fig:moe-frontiers}); the FullFT optimum is $3\!\times\!10^{-5}$ in six of eight cells, matching the dense law's $3.8\!\times\!10^{-5}$ prediction at a cost of $0.0050$ nats (dense $0.0052$); LoRA retains $97$--$100\%$ of FullFT's improvement; and instability appears only at $3\!\times\!10^{-3}$. Inverting the dense best-loss curve at the MoE's loss (Figure~\ref{fig:moe-equiv}) puts its dense-equivalent size at a median $8.5$B, far above its $3.3$B active count and close to the geometric mean of active and total ($10.1$B), so the 30B-A3B reaches roughly 8B-dense quality at a 3B active cost. The defaults then transfer blind to two larger models. Qwen3-Next-80B-A3B-Instruct interleaves gated-DeltaNet and gated-attention layers (3:1 over 48 layers)\footnote{The LoRA cells use the same \textsf{all-linear} target as the dense runs, so rank-64 adapters sit on every linear projection in the model: the query, key, value, and output projections of the 12 gated-attention layers; the fused input projections (\textsf{qkvz} and the beta/alpha projection) and the output projection of the 36 gated-DeltaNet layers; and the gate, up, and down projections of all 512 routed experts and the shared expert in each of the 48 MoE blocks. The non-linear parts of the DeltaNet path, its short causal 1D convolution and its gating and normalisation, are not adapted, since \textsf{all-linear} matches only linear layers.} with 512 experts ($79.7$B total, $3.3$B active); one blind LoRA cell per dataset at the dense defaults beats the tuned 30B on three of four datasets and falls below the dense 32B best on the two token-heavy ones, so its dense-equivalent rails at the top of the ladder, while rank-64 all-linear LoRA here trains $\approx\!8.9$B parameters ($11.2\%$ of total); its FullFT cells, first mistuned at $10^{-4}$ ($2.6\times$ above the rule), match LoRA to within $0.003$ nats once moved to the on-rule $4\times10^{-5}$. Qwen3-235B-A22B-Instruct ($235.1$B total, $22.2$B active) gets the same blind treatment, its LoRA cells eligible and lower-loss than the 80B on three of four datasets, continuing the monotone size trend, with FullFT at $4\times10^{-5}$ within $0.002$--$0.016$ nats of LoRA. Where the three MoEs sit on the dense size law, and the geometric-mean placement that fits all of them, is taken up in \S\ref{subsec:sizelaw}.

\begin{figure}[htbp]
  \centering
  \includegraphics[width=\linewidth]{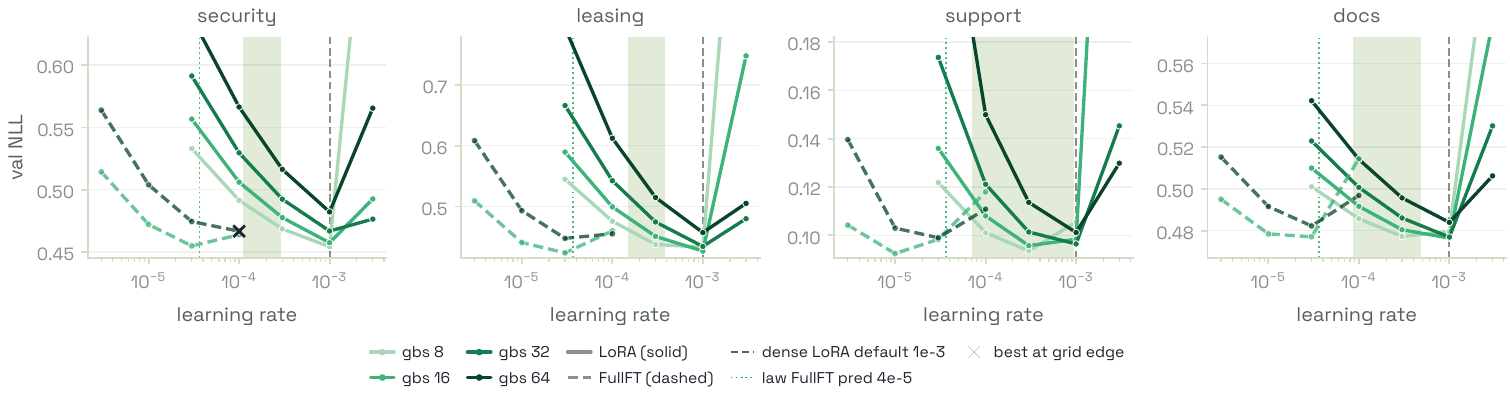}
  \caption{\textbf{The dense selection rule transfers to the MoE, for both tuners.} Validation NLL
  against learning rate for Qwen3-30B-A3B, one panel per dataset, one line per global batch (LoRA
  solid, FullFT dashed). The dashed vertical line is the dense LoRA default $10^{-3}$; the dotted
  line is the dense law's FullFT prediction at hidden size $2048$; $\times$ marks grid-edge
  optima.}
  \label{fig:moe-frontiers}
\end{figure}

\begin{figure}[htbp]
  \centering
  \includegraphics[width=\linewidth]{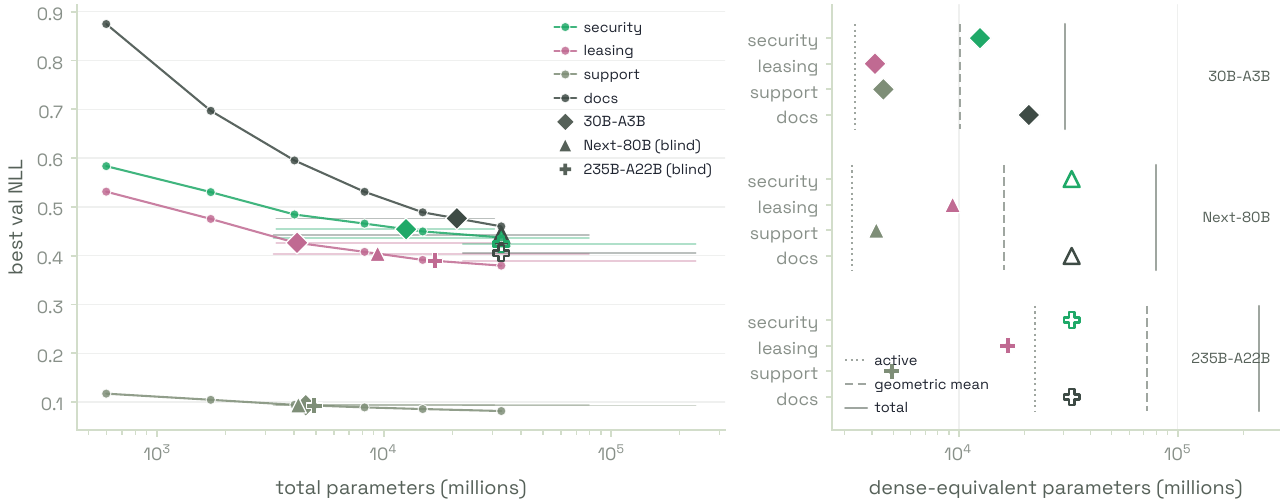}
  \caption{\textbf{All three MoEs fine-tune like dense models well above their active size on the
  token-heavy tasks: the 30B at a median $8.5$B, the 80B and 235B at or beyond the dense 32B on
  \textsf{security} and \textsf{docs}.} Left: dense Qwen3 best-NLL-vs-parameters curves (colour =
  dataset) with each MoE placed at its dense-equivalent size (30B-A3B diamonds, Next-80B triangles,
  235B-A22B pluses; open markers are railed lower bounds); the faint horizontal guides span each
  MoE's active to total parameters at its loss. Right: the per-dataset equivalent sizes against each
  MoE's three candidate placements (active $3.3$B for the 30B and 80B, $22.2$B for the 235B; totals
  $30.5$B, $79.7$B, and $235.1$B).}
  \label{fig:moe-equiv}
\end{figure}

%% file: sections/lora_hparams.tex
\section{LoRA Hyperparameters: Rank and Alpha}
\label{sec:lorahp}

The foundation sweep held the adapter fixed (rank $64$, $\alpha=32$, all-linear) and tuned only
the learning rate and batch. This section varies the adapter itself. Rank and $\alpha$ are not
independent: the standard $\alpha/r$ scaling shrinks the effective update as rank grows,
which collapses gradients at large rank unless corrected to $\alpha/\sqrt{r}$
\citep{kalajdzievski2023rslora}; the optimal learning rate can move with rank; and full
fine-tuning learns updates of $10$--$100\times$ higher rank than typical adapters
\citep{biderman2024lora}, while
\citet{zhang2024finetuning} report that adding rank barely moves the loss at all. We sweep rank
against $\alpha$ at fixed model, data, and learning-rate policy to test whether rank adds capacity
these tasks can use, and whether $\alpha$ does anything beyond rescaling the learning rate.

\subsection{Experimental setup}

The sweep uses Qwen3-4B and Qwen3-8B. We train on two datasets, \textsf{leasing} and \textsf{docs}, with $5{,}000$ examples for one epoch under the same
assistant-only masking, split policy, and no-truncation context filter used in
\S\ref{sec:foundation}. The learning rate is fixed at $10^{-3}$, the global batch is fixed at the foundation optimum for each dataset (8 for \textsf{leasing}, 16 for \textsf{docs}).

The rank sweep holds $\alpha=32$ and uses ranks
$8$, $16$, $32$, $64$, and $128$; the alpha sweep holds $r=64$ and uses $\alpha=16$, $32$,
and $64$. The shared anchor is the foundation default, $r=64,\alpha=32$.
All cells use all-linear LoRA, zero dropout, standard LoRA scaling ($\alpha/r$), and AdamW. We report losses both against the pretrained baseline and against the anchor within the same
model--dataset pair, so the readout is about the adapter geometry rather than dataset difficulty.

The question here is whether the default adapter should stay at $r=64,\alpha=32$, or whether a cheaper or larger rank or a different scale reliably lowers validation loss at the same data budget.

\subsection{Rank eventually plateaus}

 The rank response is consistent across both datasets and both model sizes
(Figure~\ref{fig:lora-rank}). Rank $8$ is too small: at $\alpha=32$ it is
$0.006$--$0.010$ nats worse than the $r=64,\alpha=32$ anchor. Rank $16$ is still
$0.003$--$0.006$ nats worse. Rank $32$ closes most of the gap, coming within
$0.0009$--$0.0028$ nats of the anchor while using about half the trainable parameters
($2.25\%$ rather than $4.40\%$ on Qwen3-4B, $1.56\%$ rather than $3.07\%$ on Qwen3-8B).

The plateau starts at rank $64$. Rank $128$ is the best validation-loss cell in three of the four model--dataset pairs, but only by $0.0004$--$0.0008$ nats; on Qwen3-8B/\textsf{leasing} it is slightly worse than rank $64$ by $0.0004$ nats. That extra rank roughly doubles the trainable parameter fraction relative to the anchor ($4.40\%\to8.43\%$ for 4B and
$3.07\%\to5.96\%$ for 8B). For this one-epoch, $5$k-example regime the useful capacity threshold is therefore around $r=64$, not $r=128$.

\begin{figure}[htbp]
  \centering
  \includegraphics[width=\linewidth]{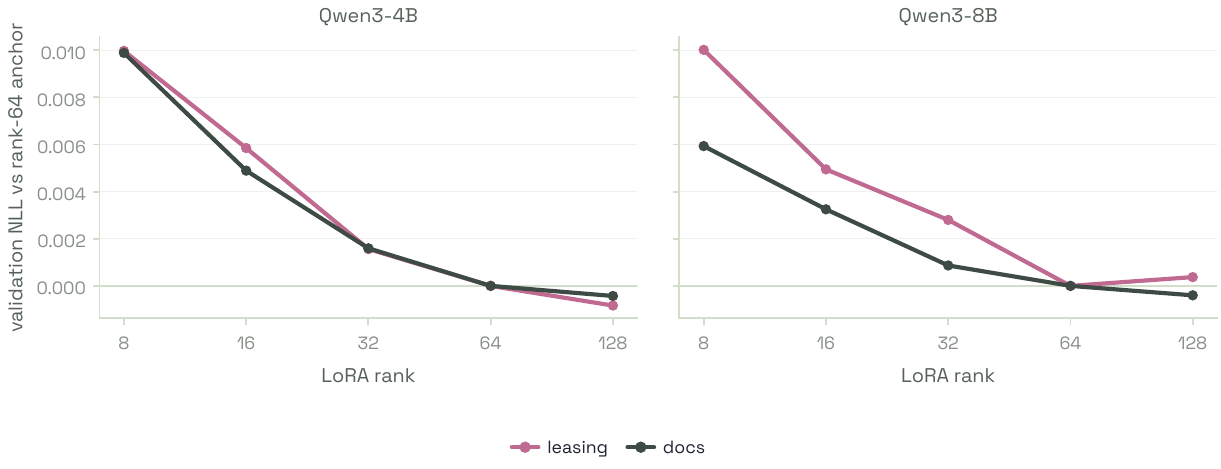}
  \caption{\textbf{Rank helps until about $64$, then mostly saturates.} Validation NLL relative
  to the $r=64,\alpha=32$ anchor, with $\alpha=32$ fixed. Rank $8$ and $16$ are clearly
  under-capacity, rank $32$ is close, and rank $128$ only gives sub-$0.001$ gains in three of four
  cells.}
  \label{fig:lora-rank}
\end{figure}

\subsection{Alpha sets the update scale}

At fixed rank $64$, $\alpha=32$ is best in every comparison (Figure~\ref{fig:lora-alpha}). Lowering to
$\alpha=16$ hurts by $0.0022$--$0.0082$ nats, with the largest penalty on
\textsf{leasing}. Raising to $\alpha=64$ also hurts, but less, by $0.0009$--$0.0037$ nats. Thus, at the optimal learning rate of $10^{-3}$, changing $\alpha$ does not reveal a better scale; it just moves the effective adapter step away from the best point.

This also explains why the rank result does not endorse arbitrarily higher rank under standard LoRA scaling. With $\alpha$ fixed, the LoRA multiplier $\alpha/r$ shrinks as rank grows, while the parameter count rises linearly. The observed curve says that the added capacity from $8\to64$ outweighs the smaller multiplier, but by rank $128$ the remaining loss improvement is too small to justify changing the default without a downstream reason.

\begin{figure}[htbp]
  \centering
  \includegraphics[width=\linewidth]{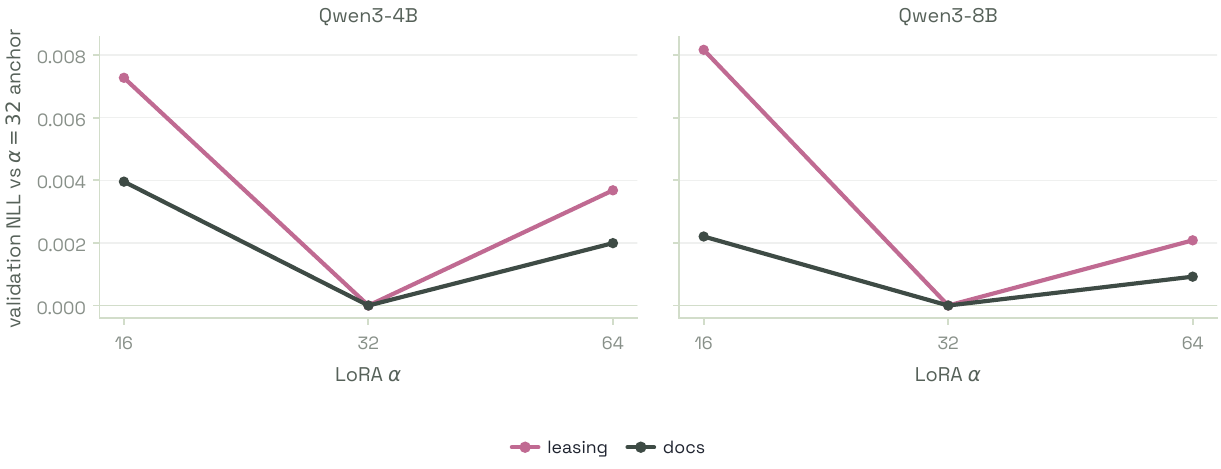}
  \caption{\textbf{At rank $64$, $\alpha=32$ is the best observed scale in all four cells.}
  Validation NLL relative to the $r=64,\alpha=32$ anchor. Both lower and higher alpha move away
  from the optimum at the fixed $10^{-3}$ learning rate.}
  \label{fig:lora-alpha}
\end{figure}

Note that we did not sweep learning rate jointly with rank, did not run a rank-stabilised LoRA variant, and did not run downstream evaluations for these cells. 

\begin{figure}[htbp]
  \centering
  \includegraphics[width=\linewidth]{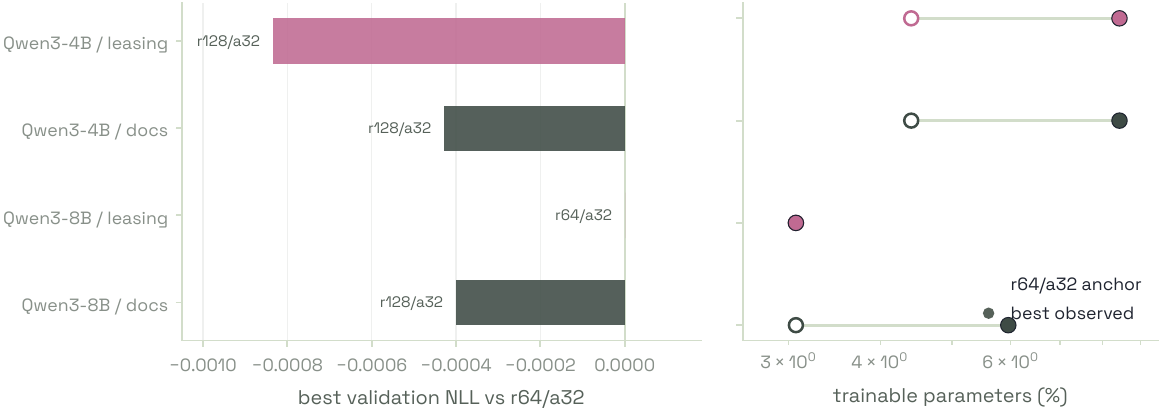}
  \caption{\textbf{The best observed cells barely improve on the rank-$64$ default, while
  rank-$128$ roughly doubles adapter size.} Left: validation NLL of the best observed
  rank/alpha setting relative to $r=64,\alpha=32$ for each model--dataset pair. Right: trainable
  parameter fraction for the anchor (open marker) and best observed cell (filled marker).}
  \label{fig:lora-cost}
\end{figure}

%% file: sections/downstream_performance.tex
\section{Predicting Downstream Performance from Loss}
\label{sec:downstream}

Every selection in this report is made by validation loss. That is legitimate only if loss
ranks what we care about: performance on our curated customer-task
evaluations. This section tests the assumption directly.

The literature supports loss as a predictor, with limits. Downstream performance is largely a
function of loss rather than of size or compute \citep{du2024emergentloss}, much apparent
``emergence'' dissolves under smooth metrics \citep{schaeffer2023mirage}, and aggregate downstream error is reliably predictable from loss \citep{gadre2024scaling, owen2024predictable, saunshi2021mathematical}. But per-task predictability is far noisier than the aggregate \citep{owen2024predictable, lourie2025unreliable}, the task metric can worsen while cross-entropy falls under distribution shift \citep{isik2024scaling}, and \citet{liu2023implicitbias} exhibit models at \emph{identical} loss with different downstream accuracy, attributing the gap to the optimiser's implicit bias toward flat minima, a bias whose strength scales with the learning-rate-to-batch-size ratio \citep{li2022sgdzeroloss, damian2021labelnoise, smith2021origin}, the two levers swept in \S\ref{sec:foundation}.

We therefore measure two quantities per run: the validation loss we already select on, and the flatness of the converged loss, summarised by the Hessian trace. At a well-fit minimiser that trace equals the trace of the Fisher information: the expected squared gradient norm of a single sampled-token log-probability, computed with one backward pass per sample and no second derivatives (Appendix~\ref{app:hessian}). The known failure modes of sharpness measures \citep{andriushchenko2023modern, granziol2020false, dinh2017sharp, flatlora2025} are handled by design: the trace is read within matched learning-rate and batch cells so recipe effects cannot confound flatness, LoRA curvature is reported as subspace curvature, and flatness is reported alongside loss.

\paragraph{Design.} For every eligible run we record validation NLL, the Fisher-trace
flatness of the trainable parameters, and graded performance on the held-out customer-task
evaluations. We ask three questions: does validation loss rank downstream quality within
(model, dataset) groups; does flatness explain the residual among runs at matched loss; and does the measured trace move with batch and learning rate as the implicit-bias theory predicts. The flatness hypothesis is falsified if, at matched loss, the trace adds no ranking information.

We have $896$ cell-by-task pairs across the $224$ cells. Within matched (model, dataset, recipe) cells, validation loss ranks judged quality on both models and tuners
(within-dataset-standardised Spearman $\rho_s$ from $-0.38$ to $-0.88$). The Fisher trace falls with the learning-rate-to-batch ratio as the implicit-bias account predicts, yet at matched loss it adds no reliable ranking and is not consistently signed, so the flatness hypothesis is not supported at the fine-tuned scale.

\subsection{Calibrating the instruments on base models}
\label{subsec:base-calibration}

Before the fine-tuned grid is scored, we calibrate both measurements on the nine base models. Each dataset has its own production evaluation: an LLM-judged instruction-following pass rate (support), a judged critical-error pass rate (leasing), a graded markup-and-writing score (docs), and a findings-comparison micro-F1 (security). We serve every base model at the full $40{,}960$-token context, regenerate the final assistant turn of the first $200$ canonical-test rows per dataset, and grade with GPT-5.5 under each dataset's judge. On the same $200$ rows we compute each base model's test NLL through the identical forward-pass and label-masking path that produces every validation loss in this report, and its Fisher trace: $64$ single-backward samples of the squared gradient norm of a sampled-token log-probability (Appendix~\ref{app:hessian}). Note that base models are nowhere near the well-fit regime in which the Bartlett identity turns this into a Hessian trace; and on a fresh base load every parameter carries gradients, so the trace is full-parameter here versus LoRA-subspace for the fine-tuned runs to come.

\begin{figure}[htbp]
  \centering
  \includegraphics[width=\linewidth]{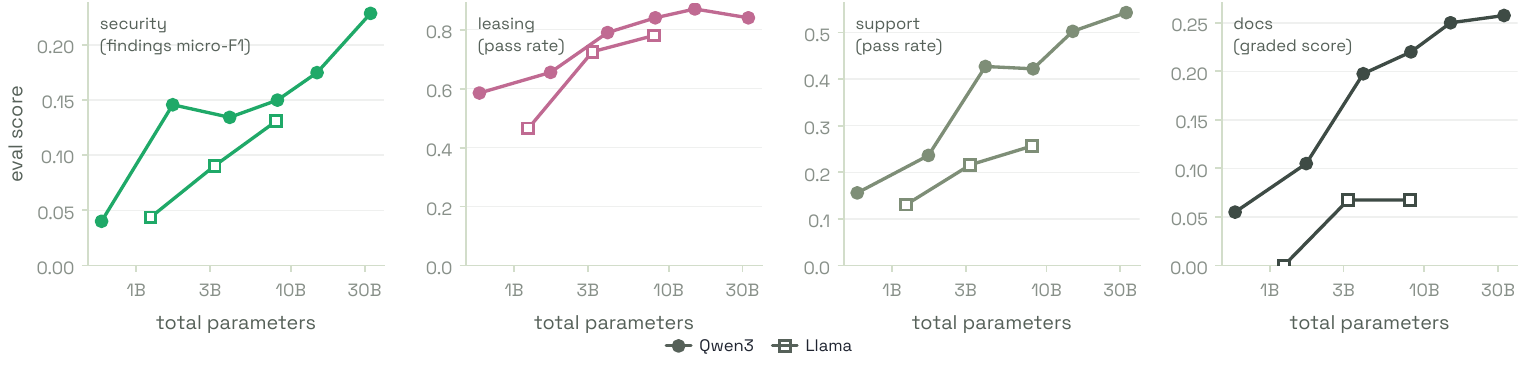}
  \caption{\textbf{The production judges resolve base-model quality before any fine-tuning.}
  Eval score against total parameters per dataset (Qwen $\bullet$, Llama $\square$).}
  \label{fig:base-calib-size}
\end{figure}

Within each family, scores rise near-monotonically with scale on all four tasks (Figure~\ref{fig:base-calib-size}), so the evaluations rank models well below the quality range fine-tuning will occupy.

\begin{figure}[htbp]
  \centering
  \includegraphics[width=\linewidth]{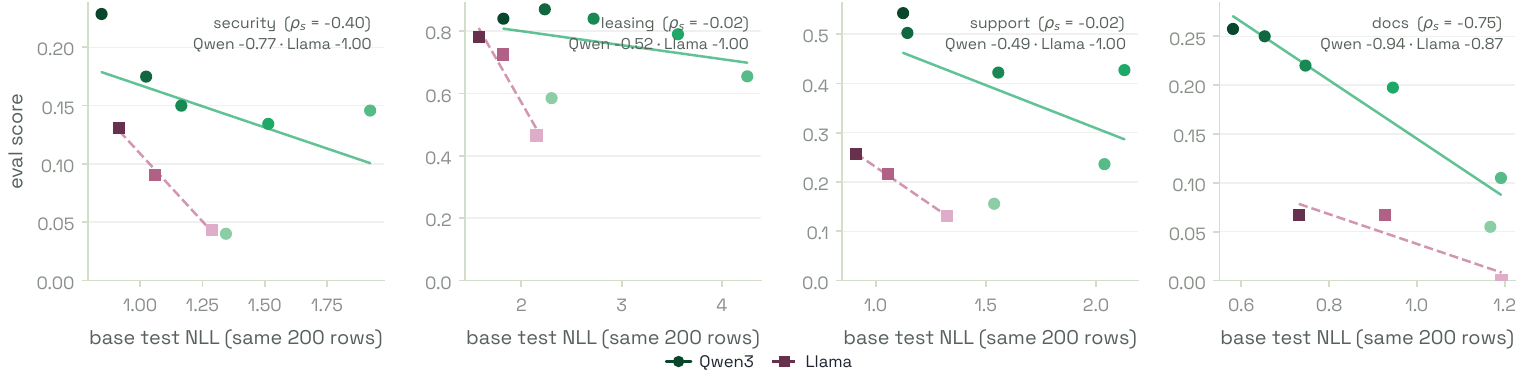}
  \caption{\textbf{Within a family, lower test NLL ranks judged quality on every dataset; pooling
  the families hides it.} Eval score against base test NLL on the same $200$ rows (Qwen $\bullet$,
  Llama $\blacksquare$; light~$\to$~dark with model scale), with a least-squares fit per family
  (Qwen solid, Llama dashed) and Spearman $\rho_s$ pooled and per family. Llama fits rest on three
  models, so its $\rho_s$ is nearly degenerate and shown for completeness.}
  \label{fig:base-calib-nll}
\end{figure}

The loss--eval calibration splits along family lines (Figure~\ref{fig:base-calib-nll}). Pooling all nine models the correlation is weak ($-0.35$ after within-dataset standardisation), but within each family it is negative on every dataset, so the near-zero aggregate is a family-mixture effect rather than an absence of signal. Llama reaches \emph{lower} NLL than Qwen at matched size on the agentic datasets while scoring \emph{lower} with the judges, so fit-to-text and judged behaviour separate, and any loss-based selection that crosses families inherits that gap. With six and three models per fit these are indicative; the fine-tuned grid re-runs the same pipeline with hundreds of runs per dataset.

\begin{figure}[htbp]
  \centering
  \includegraphics[width=\linewidth]{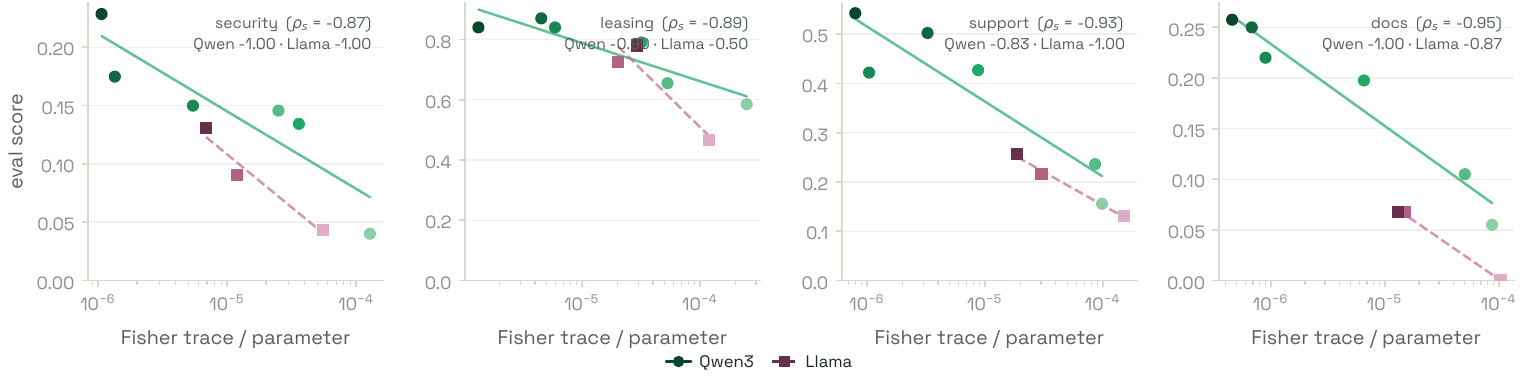}
  \caption{\textbf{The base-model Fisher trace ranks judged quality across both families at once.}
  Eval score against the per-parameter Fisher trace on the same $200$ rows (log axis; Qwen
  $\bullet$, Llama $\blacksquare$; light~$\to$~dark with model scale), least-squares fit per
  family (Qwen solid, Llama dashed) and Spearman $\rho_s$ pooled and per family. Full-parameter
  scope; $16$ examples $\times$ $4$ probes per cell.}
  \label{fig:base-calib-trace}
\end{figure}

The Fisher trace is the stronger of the two calibrations (Figure~\ref{fig:base-calib-trace}): lower trace accompanies higher judged quality on every dataset ($-0.90$ pooled, against $-0.35$ for test NLL), and Qwen and Llama sit on one common trend rather than splitting by family. The caveat matters: across bases the per-parameter trace is strongly size-correlated ($\rho_s=-0.95$ with total parameters), so the $-0.90$ largely re-expresses scale, and regressing out size leaves a partial of only $\rho_s\approx-0.15$. The fine-tuned grid resolves this by fixing model size: at matched loss within a (model, dataset, recipe) cell, does the trace still rank the judges' verdicts?

\subsection{The fine-tuned grid}
\label{subsec:finetuned-grid}

The grid tests the within-recipe question the base models could only anchor. We score the $224$
completed foundation cells (Qwen3-8B and Llama-3.1-8B, LoRA and full fine-tuning, across the swept learning rates and batch sizes) on each dataset's production judge, in-domain, regenerating the final assistant turn of the same $200$ canonical-test rows used for the base calibration. For every cell we read its final test NLL on those rows and re-measure the Fisher trace to be base-comparable: $16$ examples $\times$ $4$ probes $=64$ single-backward samples, on the test split, over the
trainable parameters (the LoRA adapter for adapter runs, the full model for full fine-tuning;
Appendix~\ref{app:hessian}). Holding (model, dataset, tuner) fixed and sweeping learning rate and
batch isolates the matched-loss question. We standardise eval, loss, and trace within each
(model, dataset) cell-set and pool the z-scores across the four in-domain datasets per (model,
tuner) group, so datasets on different metric and loss scales contribute on one footing. Four
divergent docs cells at the largest learning rate ($10^{-3}$) generate non-terminating output past
the $20\%$ judge-error gate and are excluded, leaving $220$ in-domain pairs.

\begin{figure}[htbp]
  \centering
  \includegraphics[width=\linewidth]{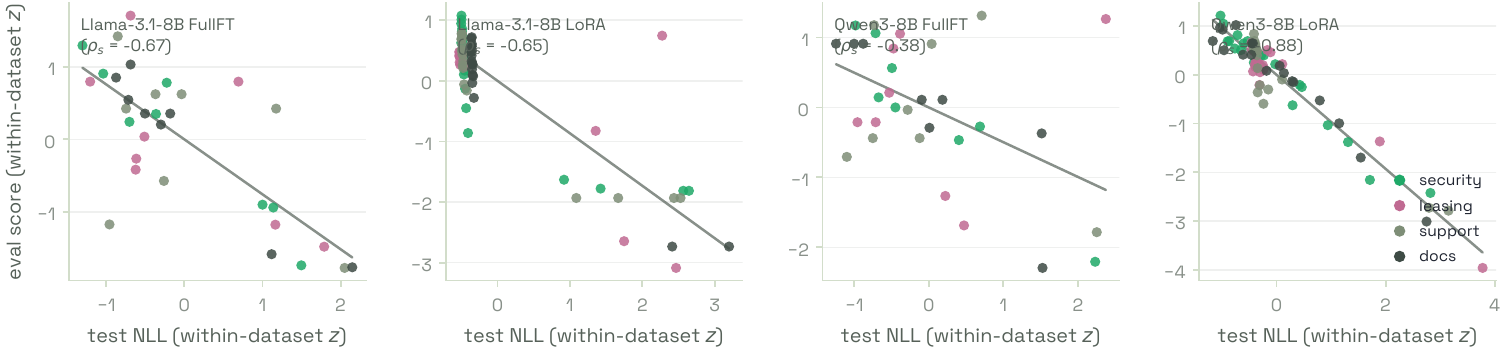}
  \caption{\textbf{Within a recipe, lower validation loss ranks higher judged quality on both models
  and tuners.} Eval score against test NLL, each standardised within its (model, dataset) cell-set
  and pooled across the four in-domain datasets (colour = dataset), per (model, tuner) group with a
  pooled least-squares fit and within-dataset-standardised Spearman $\rho_s$. Each point is one
  learning-rate$\times$batch cell.}
  \label{fig:ft-loss}
\end{figure}

Validation loss ranks judged quality within every recipe (Figure~\ref{fig:ft-loss}). Within each dataset, the standardised Spearman is negative in all four groups, from $-0.38$ on Qwen3-8B full fine-tuning to $-0.88$ on Qwen3-8B LoRA, with Pearson $r$ between
$-0.50$ and $-0.96$. The per-dataset splits agree on $14$ of the $16$ (group, dataset) cells; the two exceptions are Qwen3-8B full fine-tuning on leasing and support, where eight cells span a narrow loss range and the rank correlation is near zero ($+0.13$). With $32$ cells per full-fine-tuning group and $78$ per LoRA group, the within-recipe relation the nine bases could only suggest now holds: at fixed model, dataset, and tuner, lower validation loss accompanies higher judged quality, so selection by loss is sound inside a recipe.

\begin{figure}[htbp]
  \centering
  \includegraphics[width=\linewidth]{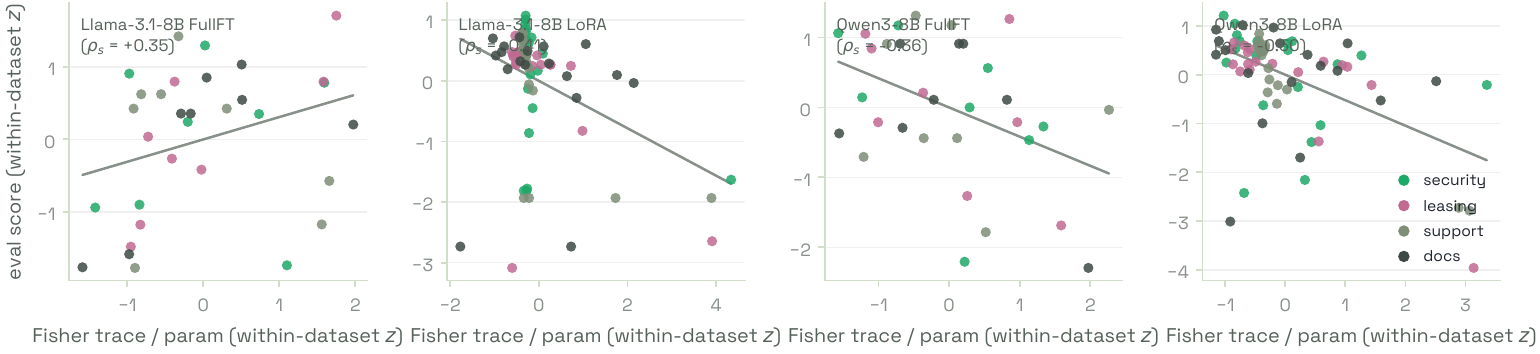}
  \caption{\textbf{At matched loss the Fisher trace does not rank judged quality.} Eval score against
  the base-comparable per-parameter Fisher trace, standardised and pooled as in
  Figure~\ref{fig:ft-loss}. The slope is weaker than for loss in every group and positive for
  Llama-3.1-8B full fine-tuning, so flatness adds no ranking information among runs at matched loss.}
  \label{fig:ft-trace}
\end{figure}

At matched loss the Fisher trace adds no reliable ranking (Figures~\ref{fig:ft-trace} and~\ref{fig:ft-summary}). Pooled within each group it is weaker than loss throughout and its sign is not stable, turning positive for Llama-3.1-8B full fine-tuning. A flatness measure carrying matched-loss information would rank eval in one direction across groups and at least as well as loss; the trace does neither. This is the falsification condition set in the design (\S\ref{sec:downstream}): within a recipe, the converged-minimum trace does not explain the residual in judged quality at matched loss.

\begin{figure}[htbp]
  \centering
  \includegraphics[width=0.7\linewidth]{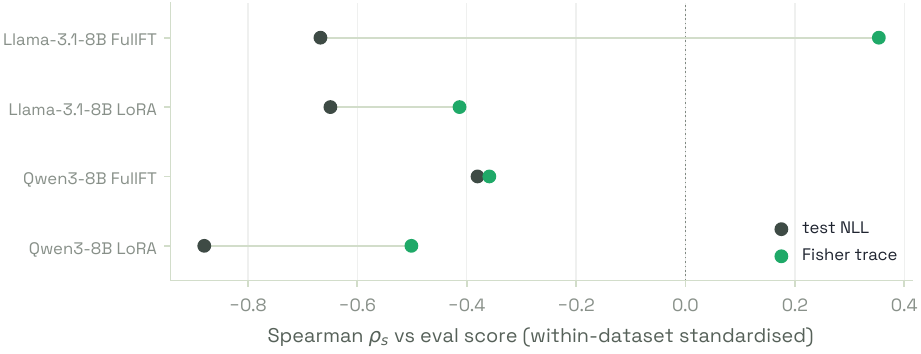}
  \caption{\textbf{Loss is the reliable within-recipe predictor; the trace is not.}
  Within-dataset-standardised Spearman $\rho_s$ against eval score for validation loss (graphite) and
  the Fisher trace (green), per (model, tuner) group. Loss lies left of zero in every group; the
  trace is weaker throughout and crosses to the wrong side of zero for Llama-3.1-8B full
  fine-tuning.}
  \label{fig:ft-summary}
\end{figure}

The contrast with the base calibration explains the reversal. The base trace's apparent edge was largely model size; fixing model and recipe removes the size axis, and the trace's ranking disappears while loss's persists. The trace remains a real flatness measure and moves as the implicit-bias account predicts \citep{liu2023implicitbias}: it falls with the learning-rate-to-batch ratio, strongly for full fine-tuning and weakly for LoRA, the ratio theory ties to flat-minima selection \citep{li2022sgdzeroloss, smith2021origin}. That dependence does not reach the judges, so for selection within a recipe validation loss is the measurement to use and the trace adds nothing reliable on top of it.

\subsection{Flatness tracks general-capability transfer}
\label{subsec:flatness-transfer}

The matched-loss test above scores the trace against in-domain quality. \citet{liu2023implicitbias}
make a narrower claim: among models at the \emph{same} pre-training loss, produced by interventions
that change the solution while holding loss fixed (continued training, a larger model, a different
algorithm), flatter minima \emph{transfer} better. A learning-rate and batch sweep at a fixed base
reproduces neither the matched-loss construction nor the transfer target, so a within-recipe null
does not bear on it. Recut against transfer, the link appears (Figure~\ref{fig:flatness-transfer}).
At genuinely matched loss, in-domain cell pairs within $0.01$ nats, the flatter cell has the higher
eval in $57$--$77\%$ of pairs (Qwen3-8B FullFT $17/22$, $p\approx0.008$), so the pooled-correlation
null above is partly a smoothing artefact, though the in-domain effect stays weak. Across the epoch
sweep (\S\ref{sec:epochs}), standardised within each (model, arm, dataset), a higher trace tracks
\emph{lower} general capability (trace--IFEval partial $\rho_s=-0.19$ at fixed loss) but
\emph{higher} in-domain task score ($+0.34$), so sharper minima accompany higher in-domain score and flatter minima higher general capability.
The optimiser change, the algorithm intervention of \citet{liu2023implicitbias}, is the clearest test:
across the eight validation-NLL-selected Muon and AdamW checkpoints the per-parameter trace ranks
IFEval at $\rho_s=-0.71$ (loss gives $-0.36$), with Muon flatter and higher on IFEval on all four
datasets. The flatness probe adds nothing to loss for selecting learning rate and batch on in-domain
quality, but it does track the general capability a fine-tune retains, most clearly when the
optimiser changes.

\begin{figure}[htbp]
  \centering
  \includegraphics[width=\linewidth]{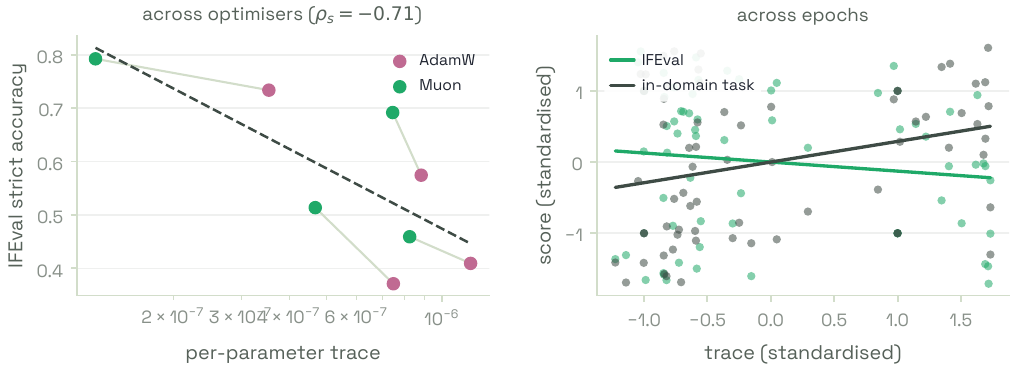}
  \caption{\textbf{Flatness tracks retained general capability rather than in-domain quality.}
  Left: across the eight validation-NLL-selected Muon (green) and AdamW (rose) checkpoints, IFEval
  rises as the per-parameter Fisher trace falls ($\rho_s=-0.71$); each line joins a dataset's pair,
  flatter and higher for Muon. Right: across the epoch sweep, standardised within (model, arm,
  dataset), a higher trace accompanies lower IFEval (green) but higher in-domain task score (graphite).}
  \label{fig:flatness-transfer}
\end{figure}

%% file: sections/scaling_laws.tex
\section{Scaling Trends in Post-Training}
\label{sec:scaling}

Pretraining obeys clean power laws in parameters, data, and compute \citep{kaplan2020scaling,
hoffmann2022chinchilla, bahri2024explaining}. Post-training has no single such law because it has
several axes at once: base-model size, supervised-data volume, the adapter's trainable-parameter
budget, and (a distinction pretraining can ignore) whether data is counted in examples or in
tokens. \S\ref{sec:foundation} showed best NLL falling monotonically with model size in both
families, but we could not fit a law with such a small number of points. This section asks which post-training
resources scale predictably on our grid, and how far any fit can be trusted. To do so, we include larger models, with three Qwen3 mixtures-of-experts (30B-A3B, Next-80B-A3B, and Qwen3-235B-A22B, \S\ref{subsec:moe}), which raises a question the dense ladder avoids: whether an MoE enters a fit at its total parameters, its active parameters, or its dense-equivalent size (median $8.5$B for the 30B-A3B; at least $32.8$B on two datasets for the
80B).

\citet{zhang2024finetuning} fit the closest prior law: fine-tuning loss decays as a power law in a
model-side factor and in example count, with model size scaling far more steeply than data. If
that holds here, a larger base model is worth more than additional labelled examples. They also
report that LoRA rank barely scales; we treat that as open, since the standard $\alpha/r$
convention collapses gradients at large rank \citep{kalajdzievski2023rslora} and full fine-tuning
learns far higher-rank updates \citep{biderman2024lora}; \S\ref{sec:lorahp} takes this up. The
unit question is the \S\ref{subsec:data} token floor restated: our datasets differ
$18$--$23\times$ in trained tokens at fixed example count, so an exponent fit in examples and one
fit in tokens need not agree.

Few-point fits also need care. The estimation literature puts the safe minimum
near five sizes \citep{choshen2024hitchhiker}, regime breaks are undetectable with few points
\citep{caballero2022broken}, and tight confidence intervals are unattainable
\citep{besiroglu2024chinchilla}. One historical hazard is avoided by construction: every cell uses
its own calibrated learning rate and batch (\S\ref{sec:foundation}), so the exponents are not
artefacts of mistuning the larger models \citep{porian2024resolving}.

\paragraph{Design.} The size-trend fit uses the Qwen3 ladder ($\ge 5$ sizes; Llama's three serve as a
transfer check), quoting ranges. LoRA and FullFT are fit separately,
testing whether the $98\%$ retention of \S\ref{sec:foundation} widens or closes with scale. MoEs
(30B-A3B, Next-80B-A3B, and 235B-A22B) enter at three candidate placements (total, active, and
geometric mean) and fit quality adjudicates between them. The data axis is measured as a minimal
Qwen3 size-by-data interaction grid: LoRA at the calibrated $10^{-3}$ learning rate and global
batch $8$, with $5$k, $10$k, and $15$k one-epoch examples on Qwen3-4B, 8B, and 14B for
\textsf{leasing} and \textsf{security}. That grid tests whether the local return to fresh
examples changes with model size, but it still does not separate examples from tokens within a
dataset.

\subsection{The model-size trend on the dense ladders}
\label{subsec:sizelaw}

The size axis is measurable now. The ladder takes one point per model: the eligible cell at the
\S\ref{sec:foundation} defaults (LoRA $10^{-3}$, FullFT $3\!\times\!10^{-5}$, global batch $16$),
fixed across families, so the fit describes what the recommended recipe achieves as size grows.
The choice costs little: against an oracle over the learning-rate grid at the same batch, the
defaults give up a mean $0.0008$ nats for LoRA and $0.0048$ for FullFT. Per dataset we fit a
saturating power law $L(N) = L_\infty + A\,N^{-\alpha}$ in total parameters on the six Qwen3
sizes, with pairs-bootstrap $95\%$ intervals ($B=1{,}000$); Llama's three sizes are scored
against the fit, never used in it.

Figure~\ref{fig:sizelaw} shows the fits and Table~\ref{tab:sizelaw} lists their parameters. Loss falls as a saturating power law in size on every dataset, with a LoRA exponent of $0.31$--$0.40$ and a steeper FullFT exponent of $0.40$--$0.51$, both with wide intervals at six points. The steeper FullFT decay restates the \S\ref{sec:foundation} retention result as a trend: the FullFT-minus-LoRA gap at matched defaults narrows from a mean $0.008$ nats at $0.6$B to $0.003$ at $32$B, so LoRA's $98\%$ retention closes with scale. The fitted floors track token load (the \S\ref{subsec:data} effect). The FullFT exponents are read as ranges, since the \textsf{support} exponent moves from $0.51$ to $0.33$ when re-fit on the oracle learning rate. Scored against the fit, Llama sits slightly above the Qwen fit at matched parameters ($23$ of $24$ residuals positive, most at 1B), an offset expected from its different tokenizer; the size trend holds.

Scoring each MoE's at-defaults loss against the dense fit at three a-priori placements (active, total, and their geometric mean) adjudicates which size an MoE fine-tunes like. The geometric mean minimises the mean absolute residual for every MoE under both tuners, at $0.016$--$0.022$ nats against $0.021$--$0.073$ for the active and total placements, and the pattern now holds up to the 235B-A22B (Figure~\ref{fig:moe-placement}). Two caveats: the 235B geometric-mean and total sizes extrapolate beyond the $0.6$--$32$B fitting range, and its FullFT cells sit at the closest observed $4\times10^{-5}$ rather than the $3\times10^{-5}$ rule. On a dense size trend, an MoE sits at roughly the geometric mean of its active and total parameters.
\begin{figure}[htbp]
  \centering
  \includegraphics[width=\linewidth]{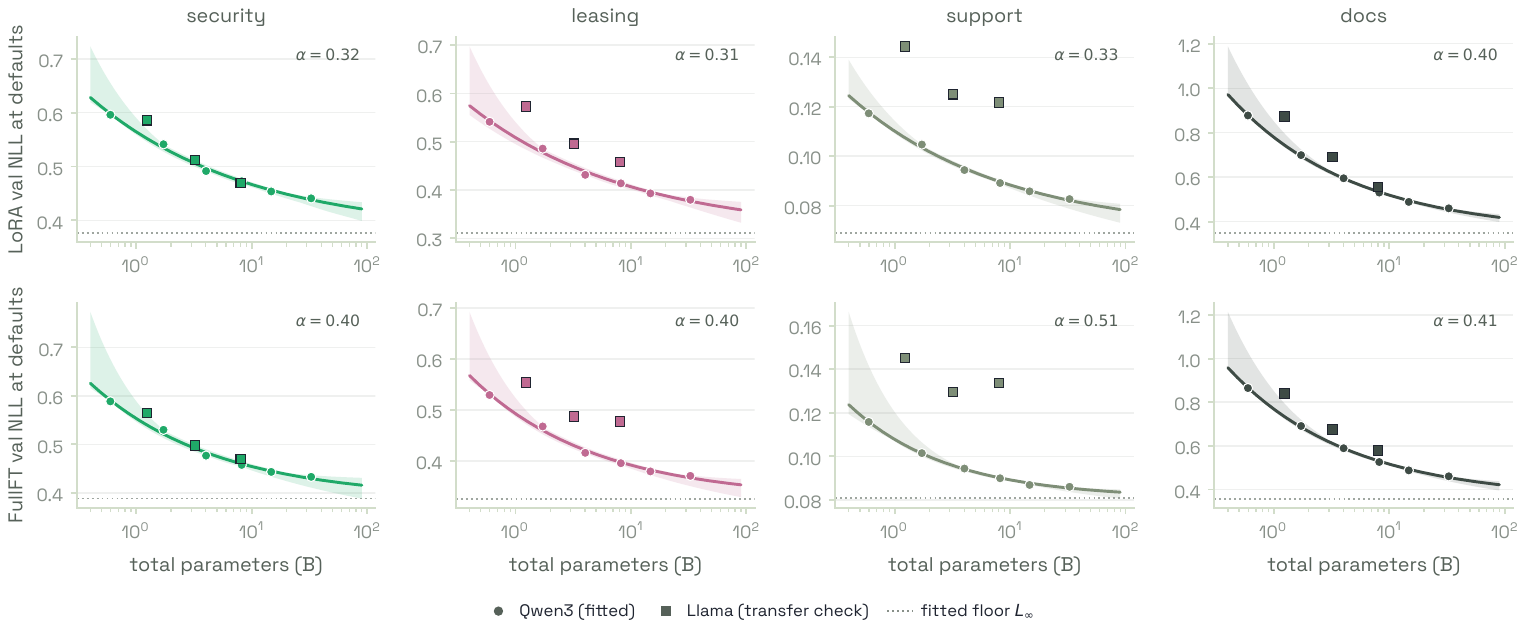}
  \caption{\textbf{At the fixed defaults, loss falls as a saturating power law in size, more
  steeply for FullFT than for LoRA.} Validation NLL at the \S\ref{sec:foundation} defaults
  against total parameters, one column per dataset (top LoRA, bottom FullFT): Qwen3 points with
  the fitted curve and its bootstrap band; Llama squares are scored against the fit, never used in
  it; the dotted line is the fitted floor $L_\infty$.}
  \label{fig:sizelaw}
\end{figure}

\begin{figure}[htbp]
  \centering
  \includegraphics[width=\linewidth]{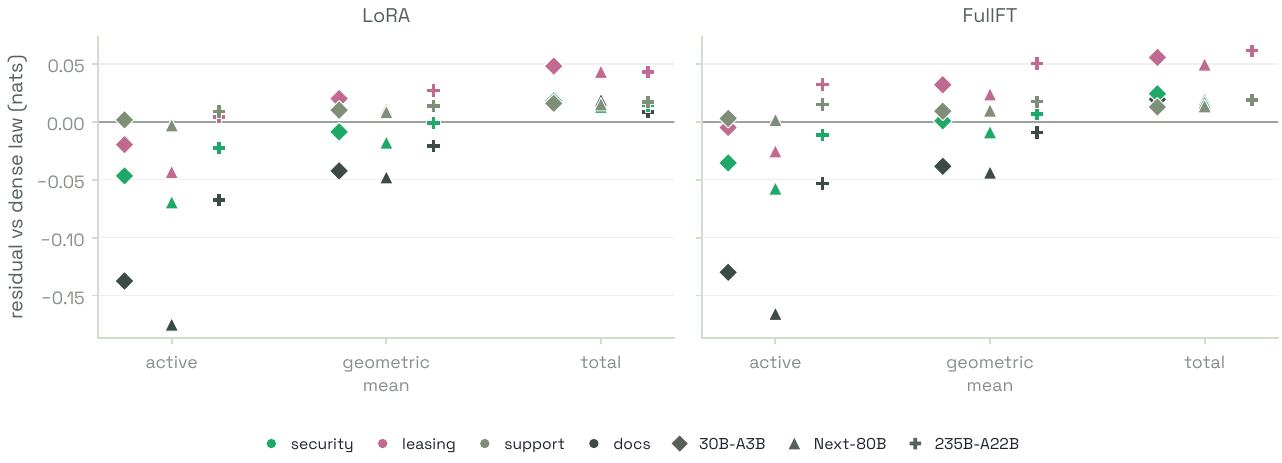}
  \caption{\textbf{All three MoEs enter the dense size trend at the geometric mean of their active
  and total parameters.} Signed residual of each MoE's at-defaults loss against the dense Qwen3 fit
  evaluated at each candidate placement (colour = dataset; diamond 30B-A3B, triangle Next-80B,
  plus 235B-A22B). Negative means the MoE beats a dense model of that size. Large-MoE FullFT cells
  use the closest observed LR to the $3\times10^{-5}$ rule: $4\times10^{-5}$ for both Next-80B and
  235B-A22B.}
  \label{fig:moe-placement}
\end{figure}

\subsection{Size by data: the return to fresh examples is dataset-specific}
\label{subsec:data-volume-law}

The fixed-8B arm showed that fresh examples lower the loss, but it left a serious ambiguity: maybe
additional data is only useful at one model size. The smallest interaction test adds the missing
Qwen3-4B and Qwen3-14B cells around the existing Qwen3-8B rows, holding the LoRA recipe fixed
($10^{-3}$, global batch $8$, one epoch) and varying only fresh examples on \textsf{leasing} and
\textsf{security}. This is still not a full scaling law in $N_{\mathrm{model}}$ and
$N_{\mathrm{data}}$; it is a local interaction check over the sizes and data volumes the report
can measure densely.

The example return is nearly parallel across sizes. Tripling fresh examples from $5$k to $15$k lowers validation NLL by about $20\%$ at all three sizes on \textsf{leasing} and about $8\%$ on \textsf{security}; across the six model--dataset curves the median gain is $14.4\%$ (Figure~\ref{fig:size-data-interaction}). The absolute loss drops with size while the fractional gain does not, so the fractional gain is size-independent. The dominant modifier is the dataset: \textsf{leasing} recovers about a fifth of the remaining NLL from tripling examples, \textsf{security} under a tenth. Examples and tokens stay confounded within a dataset, since token counts scale almost linearly with examples. When filtered in-domain examples exist, spend them before repeating data, and expect a dataset-specific gain that is stable across nearby Qwen3 sizes.

\begin{figure}[htbp]
  \centering
  \includegraphics[width=0.86\linewidth]{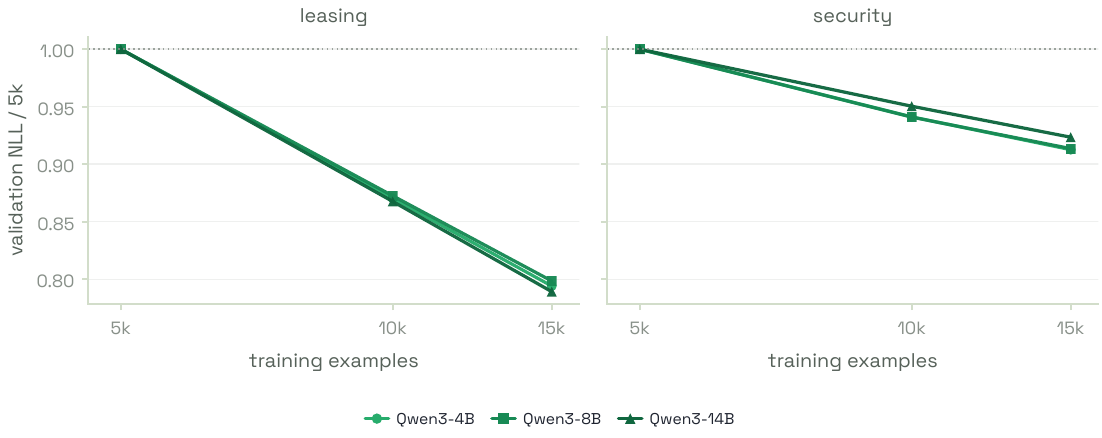}
  \caption{\textbf{Fresh examples help at 4B, 8B, and 14B; the size interaction is small relative
  to the dataset effect.} Validation NLL is normalised to each model--dataset cell's $5$k value.
  Lines are Qwen3 LoRA runs at the calibrated $10^{-3}$ learning rate and global batch $8$,
  trained for one epoch on $5$k, $10$k, and $15$k examples.}
  \label{fig:size-data-interaction}
\end{figure}

%% file: sections/optimizers.tex
\section{Optimisers}
\label{sec:optimizers}

AdamW is the standard optimiser for language-model training, a per-coordinate method: each scalar parameter is rescaled by its own running second moment, so a weight matrix is treated as a collection of scalars whose structure the update never sees. Structure-aware optimisers argue instead that transformer weights should be preconditioned as matrices. This section asks whether the most
prominent of them, Muon, carries its pretraining advantage into SFT, where the update is small
and, for LoRA, confined to a low-rank subspace.

Shampoo maintains Kronecker-factored preconditioners per tensor axis
\citep{gupta2018shampoo}; SOAP runs Adam inside Shampoo's eigenbasis \citep{vyas2024soap}; Muon
\citep{jordan2024muon} orthogonalises each 2D update by Newton--Schulz iteration, which is
steepest descent under the spectral norm \citep{bernstein2024modular}
(Appendix~\ref{app:optimizers} makes the equivalence precise). Whether these methods are equivalent was
debated in public in June 2026, in the Jordan--Anil ``is Muon just Shampoo'' exchange
\citep{jordan2026muonshampoo, anil2026muonshampoo}; the conclusion was that the abstract
preconditioner does not fix behaviour (momentum, accumulation, and implementation do), so we
document the implementation we use in Appendix~\ref{app:optimizers}. At pretraining scale the
evidence favours Muon: with weight decay and a matched update scale it trains
multi-billion-parameter models at roughly twice AdamW's compute efficiency
\citep{liu2025muonscalable}, expands the compute--time Pareto frontier \citep{essential2025muon},
and is used in frontier pretraining recipes such as the Kimi K2 line. Whether any of this survives
the small-update SFT regime is untested.

\paragraph{Design.} AdamW versus Muon at matched (model, dataset, global batch) on the same Megatron stack, selecting by validation NLL as in \S\ref{sec:foundation}. Muon orthogonalises each 2D weight update, so the natural test is full fine-tuning; we run Qwen3-8B on all four datasets at global batch $16$ for one epoch and leave the low-rank adapter case to future work. Each optimiser gets its own learning-rate grid, since Muon's update scale is not Adam's: the AdamW arm reuses the \S\ref{sec:foundation} FullFT grid ($3\times10^{-6}$ to $10^{-4}$) and the Muon arm sweeps $10^{-6}$ to $3\times10^{-4}$.

Muon's best validation NLL is at or below AdamW's on every dataset, by at most $0.012$ nats, and it reaches that minimum at a learning rate about three times lower (Figure~\ref{fig:optim-frontier}). The two arms therefore need separate grids: transferring AdamW's rate to Muon sits well past its optimum. Muon's frontier is also steeper: above its optimum the validation NLL climbs quickly while AdamW stays flat across the grid, so its loss advantage is narrow and available only when the rate is tuned down; at their best rates the two arms otherwise track near-identical loss trajectories (Figure~\ref{fig:optim-curves}).

At the loss-selected checkpoint, Muon reaches a flatter minimum on every dataset. Its per-parameter Fisher/Hessian trace is lower than AdamW's by a factor of $1.2$ to $2.8$ (Figure~\ref{fig:optim-flatness}; \S\ref{sec:downstream}, App.~\ref{app:hessian}, test split, $16\times4$ samples). At the fine-tuned scale loss ranked judged quality and the trace did not (\S\ref{sec:downstream}), so this is a property of the Muon solution, not usable for selection.

\begin{figure}[htbp]
  \centering
  \includegraphics[width=\linewidth]{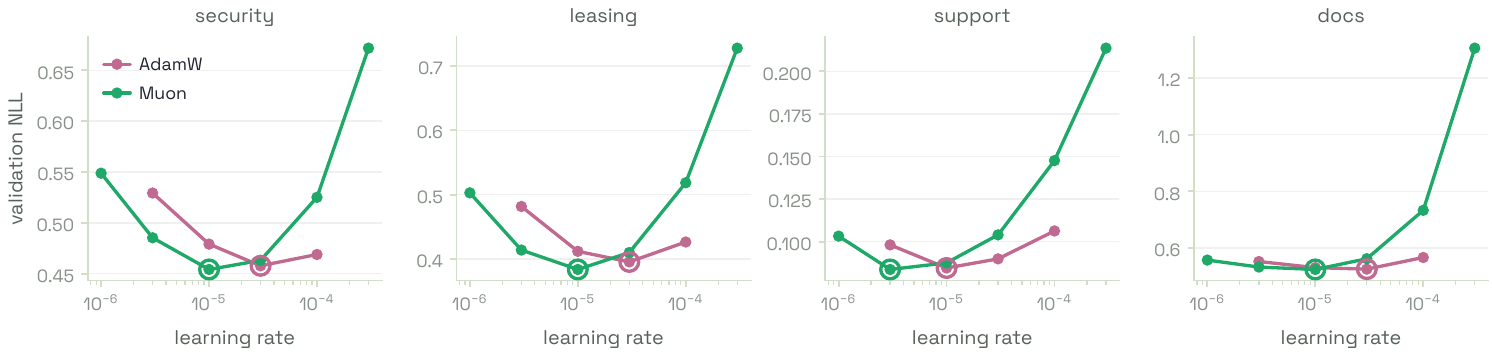}
  \caption{\textbf{Muon's loss optimum sits about $3\times$ below AdamW's and its frontier is far
  steeper.} Validation NLL against learning rate per dataset, AdamW (rose) and Muon (green), with
  each arm's best cell ringed. Muon is best at the low end and degrades sharply above its optimum;
  AdamW is flatter across the grid.}
  \label{fig:optim-frontier}
\end{figure}

\begin{figure}[htbp]
  \centering
  \includegraphics[width=0.6\linewidth]{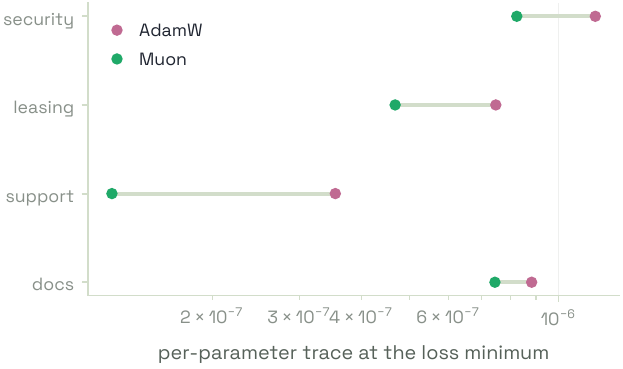}
  \caption{\textbf{Muon's loss minimum is flatter than AdamW's on every dataset.} Per-parameter
  Fisher/Hessian trace at the loss-selected checkpoint (test split, $16\times4$ samples, log axis),
  AdamW versus Muon; Muon is lower by a factor of $1.2$ to $2.8$.}
  \label{fig:optim-flatness}
\end{figure}

Deploying each arm's validation-NLL-selected checkpoint on the in-domain production judge ($200$ canonical-test rows) and on IFEval separates task quality from general capability (Figure~\ref{fig:optim-eval}). On the task judge the two arms are level, a mean difference of $0.015$ within the judge's resolution at $200$ rows, so the lower loss and flatter minimum do not improve judged task quality. On IFEval Muon scores higher on all four datasets, by a mean of $0.09$: the arm that reaches the flatter minimum retains more general instruction-following at matched task training, the consequence the flatness probe of \S\ref{sec:downstream} anticipated. This is one checkpoint per arm, so the task-judge result is read as no large effect rather than exact parity.

\begin{figure}[htbp]
  \centering
  \includegraphics[width=\linewidth]{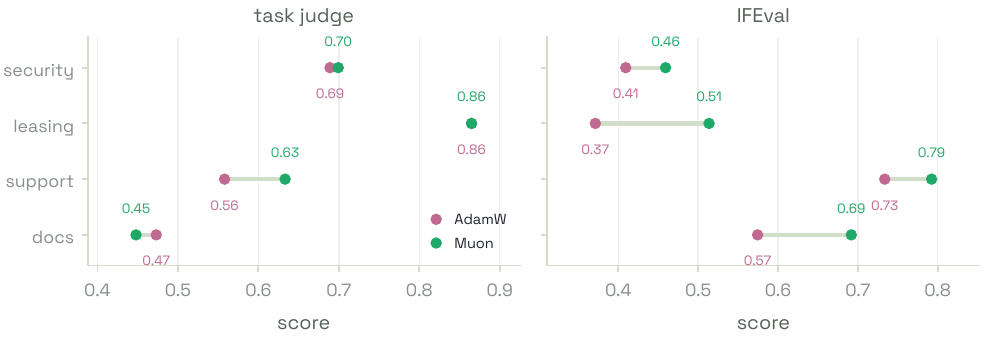}
  \caption{\textbf{Muon's lower loss and flatter minimum do not improve judged task quality, but Muon retains
  more general instruction-following.} In-domain production-judge score ($200$ canonical-test rows,
  left) and IFEval strict accuracy (right) for each optimiser's validation-NLL-selected Qwen3-8B
  FullFT checkpoint, AdamW (rose) versus Muon (green). On the task judge the two are level; on
  IFEval Muon is higher on every dataset.}
  \label{fig:optim-eval}
\end{figure}

%% file: sections/epochs.tex
\section{Epochs}
\label{sec:epochs}

Data volume is usually fixed by what exists, so the remaining axis is how many times to traverse
it. Two tensions set the optimum: more epochs reduce task loss while repeated tokens lose value
\citep{muennighoff2023dataconstrained}, and at some point general instruction-following
starts to erode, a cost that matters whenever the task instructions might change after
deployment.

\paragraph{Design.} An epoch sweep at the \S\ref{sec:foundation}-calibrated learning rate and batch per (model, dataset), tracking validation NLL, the downstream judge, and IFEval strict prompt accuracy \citep{zhou2023ifeval}, with the data-constrained law of \citet{muennighoff2023dataconstrained} as the reference for how repeated tokens decay. Each of $1,2,4,8$ epochs is a separate run with the full warmup-and-decay schedule, and the trained task is scored alongside the other three to read cross-task transfer. The grid is the two 8B models for LoRA and full fine-tuning, with a Qwen3-1.7B arm below 8B; a larger-data LoRA follow-up on the two 8B models (leasing, docs, security) separates fresh examples from repeated passes.

One split is between the metric we select on and the one we care about (Figure~\ref{fig:epochs-divergence}). At $5{,}000$ examples, validation NLL reaches its minimum by about two epochs and then rises monotonically to eight in every cell we ran (both 8B families, both tuners, all four tasks), by $40$--$120\%$ on the harder tasks, and harder still on the smaller Qwen3-1.7B. The judged task score over those same checkpoints does not follow the loss up: it holds or rises through eight epochs on both arms. Selecting epochs by validation loss would stop around two, while the production judge rates the eight-epoch checkpoint as good or better. This restates the \S\ref{sec:downstream} divergence on the epoch axis and cautions against early-stopping on validation loss alone.

\begin{figure}[htbp]
  \centering
  \includegraphics[width=\linewidth]{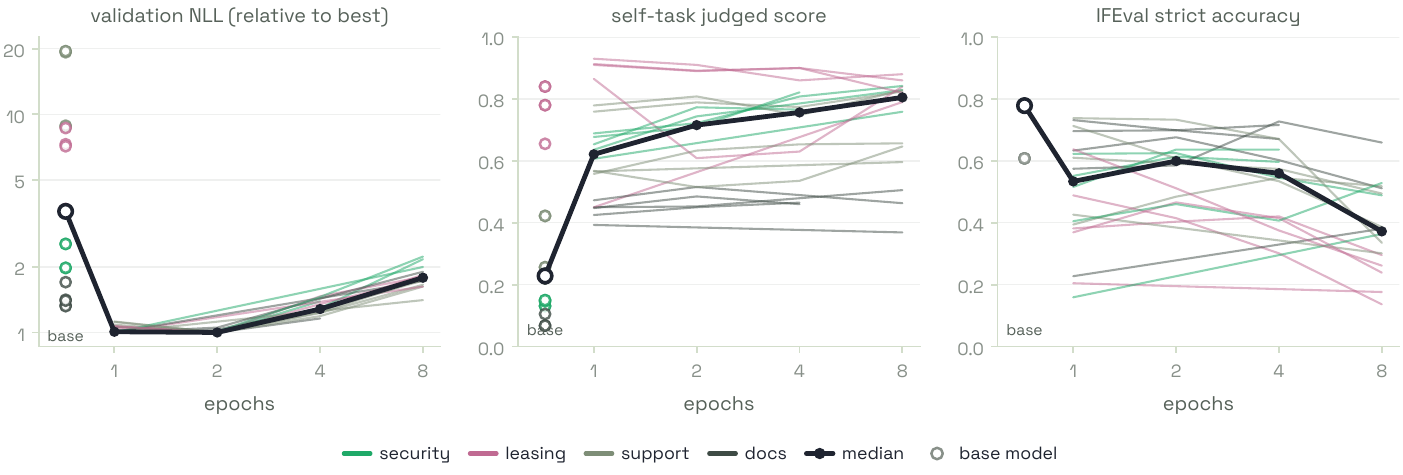}
  \caption{\textbf{With epochs the validation loss overfits and IFEval erodes, while the judged task
  score holds.} One thin line per (model, arm, task) cell, coloured by task, with the across-cell
  median in black and the untuned base model as open markers at the left. Left:
  validation NLL on a log axis, relative to each cell's best, so the fine-tuned minimum is $1$; the
  base sits several times higher, fine-tuning drops it by two epochs, then overfitting lifts it back.
  Middle: self-task judged score, which the base scores low and fine-tuning lifts then holds. Right:
  IFEval strict accuracy, which starts high at the base and erodes with epochs below it.}
  \label{fig:epochs-divergence}
\end{figure}

\subsection{Data volume: fresh examples over more passes}
\label{subsec:epochs-data-volume}

The fixed-$5\,000$ sweep leaves a practical ambiguity: if overfitting appears after two epochs, is
the right response to stop earlier or to add data and train longer? We test that directly on the
cells where the source datasets have enough filtered examples. The follow-up keeps the calibrated
LoRA learning rate and global batch for each (model, task), uses the two 8B models, drops the
support dataset, and crosses $10\,000$ and $15\,000$ examples with $1,2,4$ epochs. The
$5\,000$-example calibrated LoRA cells are loss anchors, so the loss figure spans $5$k,
$10$k, and $15$k examples. The downstream probes are lighter weight: each new
checkpoint is scored on 50 trained-task examples and 50 IFEval prompts. Cross-task transfer is
kept as an audited scratch artifact rather than a report result because several out-of-domain
generation files were dominated by endpoint timeouts. \S\ref{subsec:data-volume-law} reuses the
8B rows as anchors, while the scaling-law analysis extends the one-epoch slice across Qwen3-4B
and Qwen3-14B.

Fresh examples lower the loss; repeated passes do not. At matched example-pass budgets, replacing repeated $5{,}000$-example training with fresh $10{,}000$ examples lowers validation NLL in all $12$ comparisons (mean $16\%$), and $10{,}000$ to $15{,}000$ lowers it in all $18$ (mean $5.9\%$; Figure~\ref{fig:epochs-data-volume-loss}). The epoch optimum barely moves: within each example count the four-epoch run is worse than the best one- or two-epoch run in all $12$ larger-data curves, and the best NLL is reached at $15{,}000$ examples and two epochs in five of six cells. The rule is to add fresh examples and keep the epoch count near two, rather than train longer once data is larger.

\begin{figure}[htbp]
  \centering
  \includegraphics[width=\linewidth]{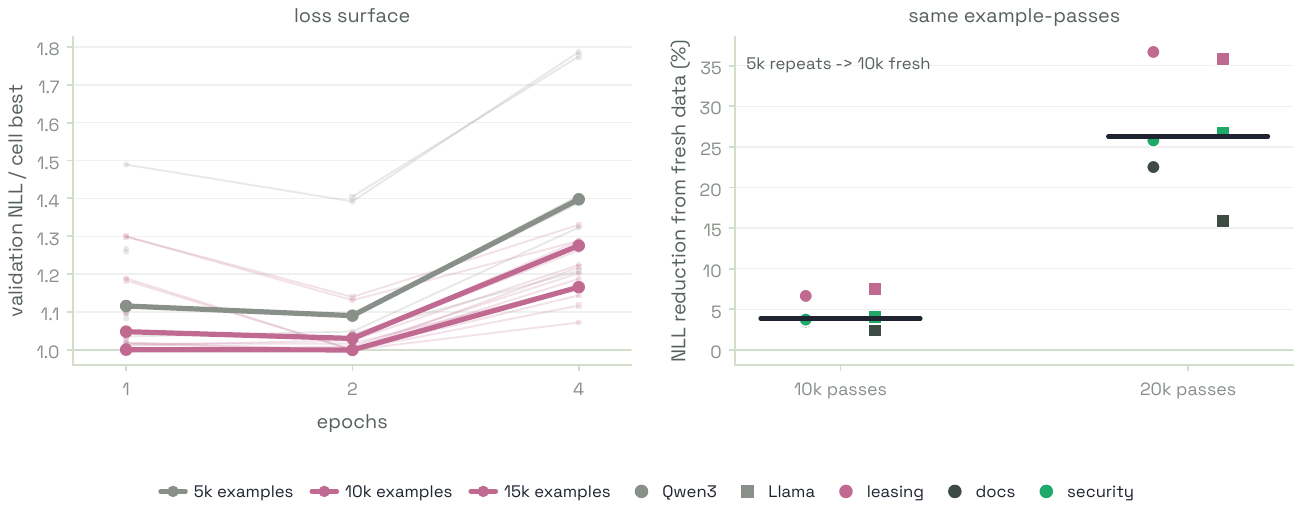}
  \caption{\textbf{More data lowers the loss curve, but it does not make four epochs optimal.}
  Left: validation NLL, normalised by each (model, task) cell's best value across the
  $5$k/$10$k/$15$k example grid. Thin lines are individual cells; thick lines are medians by
  example count. Right: matched example-pass comparisons where repeated $5$k training is replaced
  by fresh $10$k data. Every point is positive: fresh examples lower NLL at the same pass budget.}
  \label{fig:epochs-data-volume-loss}
\end{figure}

The larger-data downstream probes do not justify longer training (Figure~\ref{fig:epochs-data-volume-eval}). The 50-sample task scores are read directionally, and their later peaks come with no loss justification and with instruction-following erosion: IFEval falls from one to four epochs in ten of twelve larger-data cells, by a mean of $0.15$ and most on the instruction-heavy tasks. This agrees with the fixed-$5$k sweep: extra passes do not reliably improve the task and spend general instruction-following that more examples and fewer repeats would have kept.

\begin{figure}[htbp]
  \centering
  \includegraphics[width=0.82\linewidth]{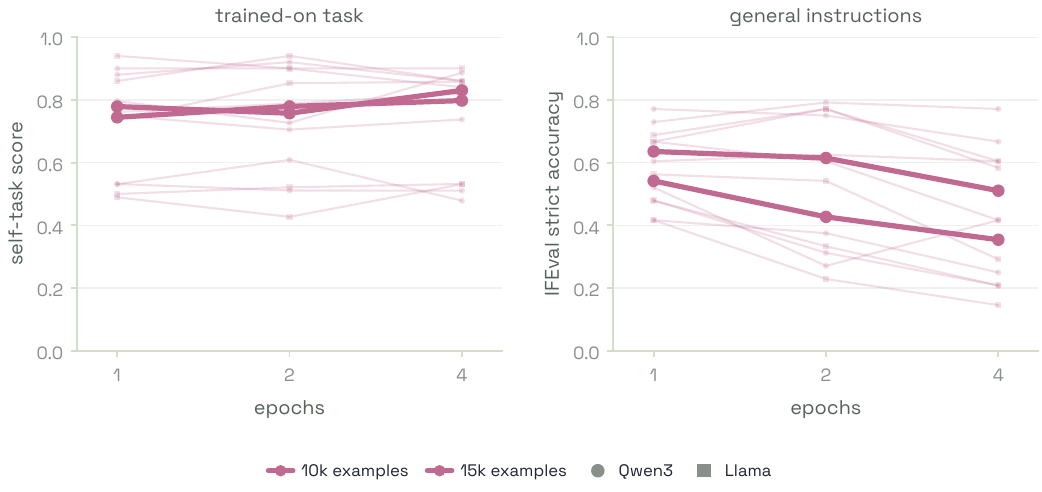}
  \caption{\textbf{The larger-data downstream probes do not justify longer training.} Each thin
  line is one (model, task, example count) cell from the $10$k/$15$k follow-up; thick lines are
  medians by example count. The trained-task score is noisy at 50 examples and sometimes peaks at
  four epochs. IFEval is cleaner: general instruction-following usually falls as epochs increase.}
  \label{fig:epochs-data-volume-eval}
\end{figure}

Across the full grid, the cost that matters is capability erosion. IFEval strict accuracy falls with epochs in about half the cells and steeply on the instruction-heavy tasks (Qwen3-8B full fine-tuning on support drops from $0.73$ at two epochs to $0.34$ at eight), while the extraction tasks, where base IFEval is already low, stay flat; the adapter erodes the same as full fine-tuning. Past about two epochs the validation loss overfits, the judged task score does not improve, and general instruction-following degrades. The useful number of epochs is therefore set by capability erosion; task quality and loss do not bind it. Most of the erosion comes from fine-tuning at all: against the untuned base (IFEval $0.78$ at 8B), even two epochs is costly, and the extra epochs past the loss optimum only deepen it.

At matched (model, task, epoch) the low-rank adapter matches or beats full fine-tuning on the task (Figure~\ref{fig:epochs-lora-vs-fullft}): it wins clearly on leasing and sits within noise on support, at far fewer parameters and no task penalty. The adapter cannot avoid the instruction-following cost: both arms erode IFEval with epochs by similar amounts, so the erosion is a property of training longer rather than of the tuning method.

Measured as the change from the untuned base on the same judge (Figure~\ref{fig:epochs-interference}, deepest epoch), fine-tuning lifts the trained task by $+0.32$ to $+0.66$ on docs, support, and security, and barely on leasing, where the base already scores well. Every off-diagonal entry is negative, by $-0.05$ to $-0.44$: training on one task lowers the base score on the others, steepest into the leasing judge. Fine-tuning specialises by raising its own task and lowering the rest, the general-capability cost the IFEval erosion also measures.

\begin{figure}[htbp]
  \centering
  \begin{minipage}[t]{0.53\linewidth}
    \centering
    \captionsetup{width=\linewidth}
    \includegraphics[width=\linewidth]{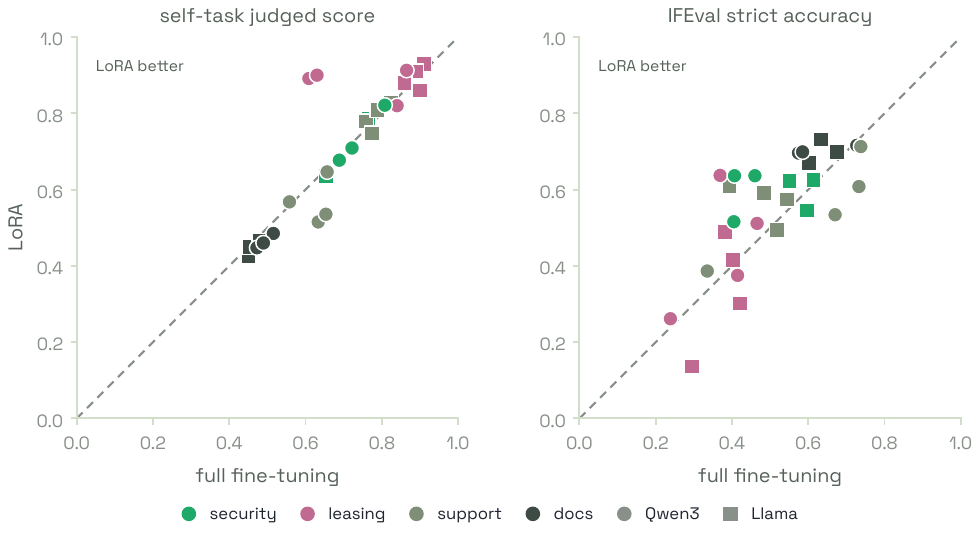}
    \caption{\textbf{LoRA is not a compromise on these tasks.} Each point is one (model, task, epoch)
    cell trained both ways: full fine-tuning on the x-axis, LoRA on the y-axis, so the diagonal is
    parity and points above it favour LoRA. Colour is the task, marker shape the model family. On the
    task score (left) LoRA sits on or above parity; on IFEval (right) the two arms scatter around it.}
    \label{fig:epochs-lora-vs-fullft}
  \end{minipage}\hfill
  \begin{minipage}[t]{0.45\linewidth}
    \centering
    \captionsetup{width=\linewidth}
    \includegraphics[width=\linewidth]{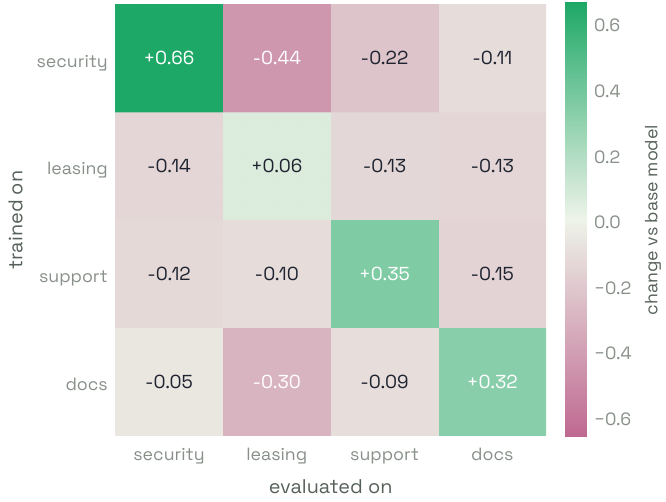}
    \caption{\textbf{Fine-tuning raises its own task and lowers the others, relative to the base
    model.} Each trained checkpoint (row) scored on every task judge (column) at the deepest epoch,
    averaged over the five (model, arm) configurations, as the change from the untuned base. Green is
    improvement, rose is degradation.}
    \label{fig:epochs-interference}
  \end{minipage}
\end{figure}

The scale grid spans Qwen3-1.7B and 8B. The 1.7B arm overfits more than the 8B models, not less: at the same $5{,}000$ examples it reaches a lower NLL early then climbs faster than the 8B models, consistent with a smaller model saturating the data sooner. We leave a 32B point to future work, as its end-of-training evaluation exceeds single-device memory at the full context.

The data-constrained law of \citet{muennighoff2023dataconstrained} describes how repeated tokens
lose value in the \emph{training} loss; here the \emph{validation} NLL overfits rather than decays,
so the relevant reading is the overfitting curve above rather than a fitted decay constant.

To separate the schedule from repetition, we trained one cell (Qwen3-8B, support) for eight epochs at a constant learning rate. It reaches essentially the same validation NLL as the decayed run ($0.13$ versus $0.14$) but erodes more (IFEval $0.15$ against $0.39$, and a lower task score). At a matched epoch budget the decay phase therefore recovers much of the erosion, so part of the cost of training longer is the sustained high learning rate rather than repeated tokens alone. This is a single probe, read as indicative.

%% file: sections/conclusion.tex
\section{Conclusion}
\label{sec:conclusion}

We aimed to remove per-job guesswork for post-training. We change one variable at a time, across two model families, dense and mixture-of-experts, LoRA and full fine-tuning, on four customer SFT datasets whose construction (Appendix~\ref{app:isft}) makes a
change in loss or judged score attributable to the lever rather than to the data.

The two levers every run must set behave predictably. The optimal LoRA learning rate is flat at $10^{-3}$ across $0.6$--$32$B and both families, roughly $33\times$ the full fine-tuning optimum, and global batch trades loss against cost and has no single best value (\S\ref{sec:foundation}). The rule transfers unchanged to a held-out 30B mixture-of-experts and keeps the monotone size trend when applied blind to 80B and 235B. The adapter adds capacity up to about rank $64$ and then plateaus, with $\alpha=32$ the best scale in every cell, so $r=64,\alpha=32$ stays the default and rank $32$ is the cheaper alternative (\S\ref{sec:lorahp}).

Validation loss, the metric every selection relies on, ranks judged quality within a fixed (model, dataset, recipe) cell (within-dataset-standardised Spearman $-0.38$ to $-0.88$), so selection by loss is sound inside a recipe (\S\ref{sec:downstream}). It does not transfer across model families, where a lower-NLL model can score worse with the judge, and the loss-landscape flatness proxy adds nothing reliable for in-domain selection at matched loss. Flatness instead tracks the general capability a fine-tune retains, most clearly across optimisers (\S\ref{sec:downstream}).

At the fixed defaults, loss falls as a saturating power law in size, steeper for full fine-tuning than for LoRA, so LoRA's $98\%$ retention narrows with scale, and all three MoEs enter the trend at roughly the geometric mean of their active and total parameters (\S\ref{sec:scaling}). Tripling fresh examples lowers one-epoch LoRA loss in every measured cell, with a return that is mostly dataset-specific. Muon carries a narrow version of its pretraining advantage into SFT, reaching a marginally lower validation NLL and a flatter minimum at a learning rate about $3\times$ below AdamW's, with task quality unchanged and more general instruction-following retained (\S\ref{sec:optimizers}). On the epoch axis the cost that matters is capability erosion: past about two epochs the validation loss overfits, the judged task score does not improve, and instruction-following degrades, while fresh examples beat repeated passes at a matched budget (\S\ref{sec:epochs}).

Each dataset is generated by iterative SFT to pass a customer-built evaluation (Appendix~\ref{app:isft}), so the supervised target carries little label noise and the downstream judge is a reliable, customer-validated criterion. The same construction is the main caveat: because the data is optimised to pass the judge, the judge is not independent of the training objective, so we read it as fit to the task as specified and use IFEval as a cross-criterion check.

Several axes remain open. Loss-based selection across model families inherits the fit-to-text-versus-behaviour gap of \S\ref{sec:downstream} and still needs a downstream check. The optimiser comparison covers full fine-tuning only, leaving open whether Muon's edge survives a rank-$64$ adapter (\S\ref{sec:optimizers}). The example-count results cannot separate examples from tokens within a dataset, since the two co-vary (\S\ref{subsec:data-volume-law}). The usable learning-rate ceiling is stability-bounded and model-dependent (\S\ref{subsec:instability}), so a widened grid must carry its own stability checks. And the epoch and scaling ladders stop where end-of-training evaluation exceeds single-device memory at full context, leaving a 32B epoch rung to future work. With four datasets and few sizes per family, we treat the directions of these results as the robust findings and quote magnitudes with their uncertainty.

Beyond the present grid, several directions remain. Reasoning models are one such direction: how one behaves when fine-tuned on data with no reasoning traces, or on off-policy traces from another model. The interaction with pretraining is a second, in particular whether continued pretraining on assistant outputs alone changes how a model later responds to SFT. On the method axis, the adapter study can widen beyond LoRA to learned weight deltas and to quantisation-aware training, with the deployment cost measured under post-training quantisation. The grid can also be pushed to newer model families and much longer data runs. And rejection-sampling fine-tuning, where the supervised set is the model's own grader-accepted outputs, connects the iSFT testbed here to on-policy RL.

%% file: sections/appendix_timing.tex
\section{Wall-clock, compute, and memory cost of the batch axis}
\label{app:timing}

\S\ref{sec:foundation} reads global batch as a loss/cost frontier; this appendix measures the cost
side. We run a controlled timing sweep over every Qwen3 cell (three models $\times$ four datasets
$\times$ global batch $\{8,16,32,64\}$, for both LoRA and FullFT) on a single 8$\times$H200 node
with in-training evaluation and checkpointing disabled. The reported cost is the steady-state
\emph{seconds per optimiser step} (the slope of Megatron's cumulative elapsed time over the
post-warmup iterations), which excludes setup, evaluation, and checkpoint writes. The metric is
reproducible to a coefficient of variation of $\approx 0.3\%$ across repeated seeds, whereas naive
end-to-end wall time over-reports true training time by $\approx 30\%$ (Megatron initialisation
plus the end-of-run evaluation). Throughput, peak memory, and the model dimensions used for FLOPs
are read from the same per-iteration logs; \texttt{micro\_batch\_size}\,$=\min(\text{global},32)$
throughout, matching the foundation study.

\paragraph{Per-step time scales with the batch; one-epoch time stays roughly fixed.}
Because the microbatch tracks the global batch up to 32, each optimiser step processes about
\texttt{global\_batch} examples, so seconds per step grows almost linearly with batch
(Figure~\ref{fig:timing-step}). At one fixed epoch the number of steps falls by the same factor,
so total one-epoch wall-clock is nearly batch-invariant: within about $10\%$ across the range,
and marginally lower at the largest batch as per-step overhead amortises
(Figure~\ref{fig:timing-epoch}). At a fixed token budget the cost of a small batch is its extra
optimiser steps; raising the batch adds almost no wall-clock until the memory limit is reached.

\paragraph{Compute efficiency.}
LoRA and FullFT cost almost the same seconds per step at matched settings, but FullFT performs
roughly $1.5\times$ the useful FLOPs per token ($6N$ versus $4N$, since LoRA skips the frozen base
weight gradients), so its achieved FLOP/s is correspondingly higher (Figure~\ref{fig:timing-flops}).
ms-swift's Megatron backend emits no FLOP counter, so we evaluate Megatron's per-token formula on
the logged GQA/SwiGLU dimensions (model FLOPs, no recompute). Achieved utilisation is low (single
digits to the low teens of the H200 bf16 peak) because the recipe pins tensor-parallel size to 8
even on the 0.6B model and long-sequence attention adds non-matmul work; a smaller tensor-parallel
degree on the small models would raise it.

\paragraph{Memory.}
Peak device memory rises while the microbatch grows with the batch, then flattens at global batch
64, where the extra batch comes from gradient accumulation rather than a larger microbatch
(Figure~\ref{fig:timing-mem}). FullFT sits well above LoRA at every size. Peak memory is what
limits how far the global batch can be raised.

\begin{figure}[htbp]
  \centering
  \includegraphics[width=\linewidth]{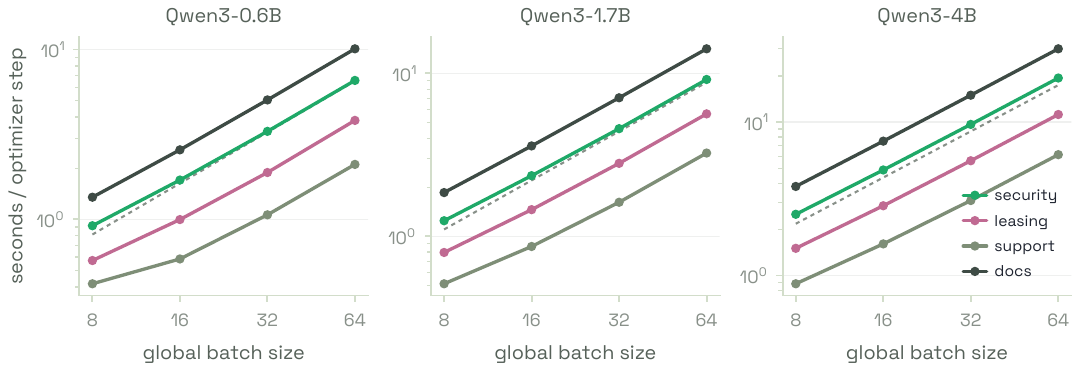}
  \caption{\textbf{Seconds per optimiser step grows almost linearly with global batch.}
  Steady-state per-step time (LoRA), faceted by model, one line per dataset; the dashed guide is
  exact linear scaling. Throughput is strongly dataset-bound through sequence length
  (\textsf{support} shortest, \textsf{docs} longest).}
  \label{fig:timing-step}
\end{figure}

\begin{figure}[htbp]
  \centering
  \includegraphics[width=\linewidth]{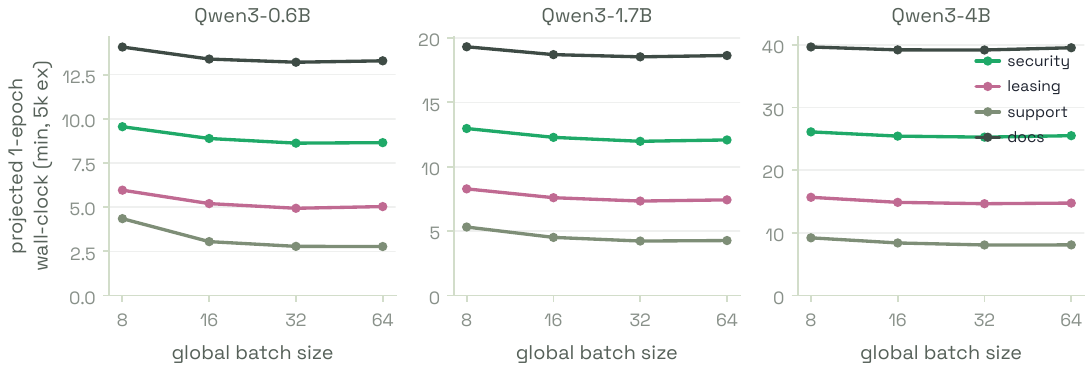}
  \caption{\textbf{One-epoch wall-clock is nearly batch-invariant.} Projected epoch time at the
  foundation $5{,}000$-example budget (LoRA), faceted by model, one line per dataset; the extra
  optimiser steps of a small batch cost almost no additional wall time, so batch should be chosen
  on loss and memory.}
  \label{fig:timing-epoch}
\end{figure}

\begin{figure}[htbp]
  \centering
  \includegraphics[width=0.7\linewidth]{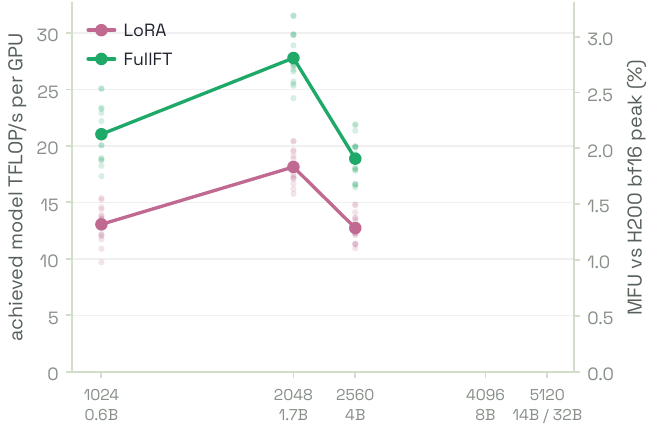}
  \caption{\textbf{FullFT achieves higher FLOP/s than LoRA by about the $6N/4N$ ratio; both sit
  far below peak.} Achieved model FLOP/s per GPU against model size (mean over datasets and batch
  sizes; faint points are the per-cell spread). The right axis expresses the same quantity as
  model-FLOPs utilisation against the H200 bf16 peak.}
  \label{fig:timing-flops}
\end{figure}

\begin{figure}[htbp]
  \centering
  \includegraphics[width=\linewidth]{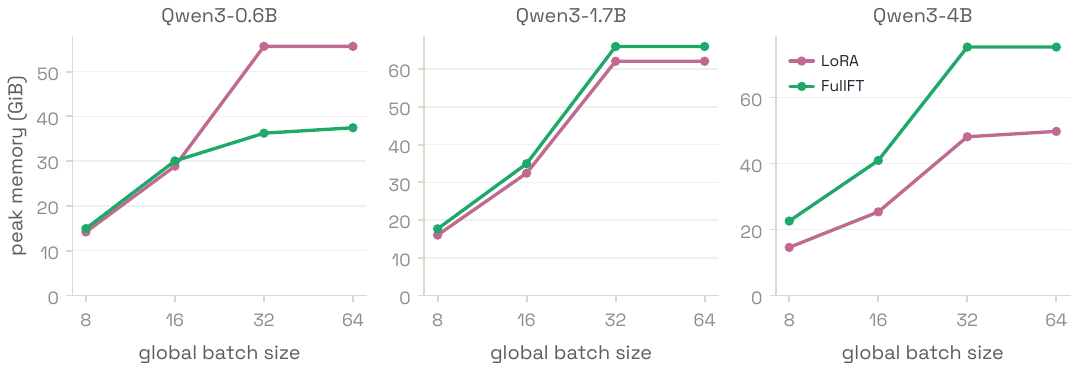}
  \caption{\textbf{Peak memory caps the batch.} Peak device memory against global
  batch size, faceted by model, for LoRA and FullFT. Memory grows with the microbatch (to 32) and
  flattens at batch 64, where gradient accumulation supplies the additional batch; FullFT is
  uniformly heavier.}
  \label{fig:timing-mem}
\end{figure}

%% file: sections/appendix_frontiers.tex
\section{Dataset statistics and coverage}
\label{app:data}

The four anonymised SFT datasets and their per-family token composition and eligible-cell counts.
Token counts are per example and differ across families because the tokenizers differ; ``trained
tokens'' is the total assistant-loss tokens over one $5{,}000$-example epoch. The datasets span an
order of magnitude in token load, which sets the achievable-loss floor of \S\ref{subsec:data}.

Of the $1{,}008$ planned dense cells, $970$ are eligible ($682$ LoRA, $288$ FullFT; $658$ Qwen3,
$312$ Llama). The $38$ excluded cells all fail to beat their pretrained baseline; every one is a
LoRA run at the top learning rate $3\!\times\!10^{-3}$, concentrated on the largest models
(Llama-3.1-8B contributes $16$); this is the high-learning-rate instability of
\S\ref{subsec:instability}. The MoE sweep (\S\ref{subsec:moe}) adds $112$ cells ($80$ LoRA, $32$
FullFT reference), of which $111$ are eligible; the one exclusion is again a LoRA run at
$3\!\times\!10^{-3}$ (batch $8$). The Qwen3-Next-80B blind test adds $8$ single-point cells
($4$ LoRA, $4$ FullFT), all eligible.

\begin{table}[htbp]
  \footnotesize
  \centering
  \caption{Per-dataset token composition (per example) and eligible-cell counts, by family.}
  \label{tab:data}
  \begin{tabular}{llrrrcc}
    \toprule
    Dataset & Family & Input tok & Asst.\ tok & Trained tok (M) & LoRA cells & FullFT cells \\
    \midrule
    \textsf{support}  & Qwen3 & $5{,}513$  & $167$   & $0.83$  & $116$ & $48$ \\
                      & Llama & $4{,}026$  & $129$   & $0.65$  & $55$  & $24$ \\
    \textsf{leasing}  & Qwen3 & $8{,}974$  & $219$   & $1.10$  & $117$ & $48$ \\
                      & Llama & $8{,}606$  & $192$   & $0.96$  & $55$  & $24$ \\
    \textsf{security} & Qwen3 & $12{,}959$ & $998$   & $4.99$  & $118$ & $48$ \\
                      & Llama & $12{,}808$ & $1{,}003$ & $5.01$ & $55$ & $24$ \\
    \textsf{docs}     & Qwen3 & $20{,}084$ & $3{,}019$ & $15.09$ & $115$ & $48$ \\
                      & Llama & $19{,}911$ & $2{,}967$ & $14.83$ & $51$  & $24$ \\
    \midrule
    \textsf{support}  & MoE   & $5{,}494$  & $147$   & $0.73$  & $19$  & $8$ \\
    \textsf{leasing}  & MoE   & $8{,}952$  & $198$   & $0.99$  & $20$  & $8$ \\
    \textsf{security} & MoE   & $12{,}955$ & $994$   & $4.97$  & $20$  & $8$ \\
    \textsf{docs}     & MoE   & $20{,}080$ & $3{,}015$ & $15.07$ & $20$ & $8$ \\
    \bottomrule
  \end{tabular}
\end{table}

\section{The iterative-SFT testbed}
\label{app:isft}

Each of the four datasets is produced by the iterative supervised fine-tuning (iSFT) pipeline
\citep{oneill2025isft}, and that construction makes the loss and judge comparisons in this
report well-posed. This appendix records the loop and the one limit it places on the downstream
readout.

Each task begins with an evaluation, built with the customer's domain experts over weeks, that scores a candidate output against the task's acceptance criteria (a graded rubric for the writing tasks, a pass/fail or micro-F1 check for the extraction tasks; \S\ref{subsec:base-calibration} lists the four). Training examples are then generated rather than hand-written: a model drafts an output, the evaluator grades it and returns structured feedback on why it fell short, and the model revises until it passes. Only passing outputs are kept, so a run trains on inputs paired with outputs that already score well under the task's own evaluation, with no human-written gold answer in the loop.

Two properties follow, both of which matter for a one-lever-at-a-time sweep. The supervised target is internally consistent: because every output was accepted by the same evaluator, the dataset carries little of the label noise or contradictory gold answers that make absolute loss hard to interpret, so a change in validation NLL between two cells reflects the lever under study rather than disagreement in the labels. And the evaluation is reliable enough to be the downstream judge: the production judge of \S\ref{sec:downstream} is the same customer-validated criterion the data was generated to satisfy, which is why it resolves base-model quality (\S\ref{subsec:base-calibration}) and ranks fine-tuned cells within a recipe (\S\ref{subsec:finetuned-grid}). Generating data to pass the grader also avoids the plain-SFT regime in which adding noisier examples eventually hurts, so the example-count results of \S\ref{subsec:data-volume-law} and \S\ref{subsec:epochs-data-volume} measure the return to more clean data.

The construction that makes the judge reliable also makes it non-independent: because the training data was optimised to pass the evaluation, the judge is the criterion the data targets rather than an oracle external to it. ``Clean'' here means a well-posed target with low label noise and a trustworthy, customer-validated readout; it does not mean the downstream score is independent of the training objective. We therefore read the judge as a measure of fit to the task as specified, and use the cross-criterion IFEval probe (\S\ref{sec:epochs}) to check that a fine-tune has not gained task fit by eroding general capability the evaluation does not score.

\section{LoRA retention by model}
\label{app:lora-retention}

The per-model breakdown behind the median $98\%$ retention of \S\ref{sec:foundation}: the fraction
of full fine-tuning's improvement over the pretrained baseline that LoRA recovers, and the LoRA
trainable-parameter percentage, for each model.

\begin{table}[htbp]
  \footnotesize
  \centering
  \caption{\textbf{LoRA recovers a median $98\%$ of FullFT's gain at $3$--$13\%$ of the
  parameters.} ``Retained'' is the median fraction of FullFT's improvement over the pretrained
  baseline that LoRA recovers, over matched (dataset, batch) cells; ``trainable'' is the LoRA
  trainable-parameter percentage.}
  \label{tab:lora}
  \begin{tabular}{lcc@{\hskip 2.5em}lcc}
    \toprule
    Model & Retained & Trainable & Model & Retained & Trainable \\
    \midrule
    Qwen3-0.6B & $97\%$ & $12.6\%$ & Qwen3-14B & $99\%$ & $3.8\%$ \\
    Qwen3-1.7B & $99\%$ & $8.6\%$  & Qwen3-32B & $99\%$ & $3.1\%$ \\
    Qwen3-4B   & $99\%$ & $6.5\%$  & Llama-3.2-1B & $97\%$ & $7.3\%$ \\
    Qwen3-8B   & $98\%$ & $4.8\%$  & Llama-3.2-3B & $97\%$ & $6.8\%$ \\
               &        &          & Llama-3.1-8B & $99\%$ & $4.5\%$ \\
    \bottomrule
  \end{tabular}
\end{table}

\section{Size-law fit parameters}
\label{app:sizelaw}

The per-dataset saturating-fit parameters behind the size trend of \S\ref{subsec:sizelaw}.

\begin{table}[htbp]
  \footnotesize
  \centering
  \caption{\textbf{Saturating size-trend fits per dataset at the fixed defaults.} $L_\infty$ and
  $\alpha$ with pairs-bootstrap $95\%$ intervals ($B=1{,}000$), the exponent re-fit on the
  oracle learning rate, and the fit RMSE; six Qwen3 sizes per fit, total parameters in
  billions.}
  \label{tab:sizelaw}
  \begin{tabular}{llcccc}
    \toprule
    Tuner & Dataset & $L_\infty$ & $\alpha$ & $\alpha$ (LR oracle) & RMSE \\
    \midrule
    LoRA & \textsf{security} & $0.376$ \;{\footnotesize$[0.24,0.42]$} & $0.32$ \;{\footnotesize$[0.16,0.64]$} & $0.32$ & $0.0039$ \\
         & \textsf{leasing}  & $0.311$ \;{\footnotesize$[0.12,0.37]$} & $0.31$ \;{\footnotesize$[0.14,0.70]$} & $0.31$ & $0.0048$ \\
         & \textsf{support}  & $0.069$ \;{\footnotesize$[0.05,0.08]$} & $0.33$ \;{\footnotesize$[0.20,0.56]$} & $0.33$ & $0.0007$ \\
         & \textsf{docs}     & $0.349$ \;{\footnotesize$[0.30,0.41]$} & $0.40$ \;{\footnotesize$[0.35,0.61]$} & $0.40$ & $0.0039$ \\
    \midrule
    FullFT & \textsf{security} & $0.388$ \;{\footnotesize$[0.29,0.43]$} & $0.40$ \;{\footnotesize$[0.21,0.82]$} & $0.34$ & $0.0049$ \\
           & \textsf{leasing}  & $0.326$ \;{\footnotesize$[0.24,0.36]$} & $0.40$ \;{\footnotesize$[0.23,0.77]$} & $0.39$ & $0.0044$ \\
           & \textsf{support}  & $0.081$ \;{\footnotesize$[0.07,0.08]$} & $0.51$ \;{\footnotesize$[0.34,0.89]$} & $0.33$ & $0.0004$ \\
           & \textsf{docs}     & $0.358$ \;{\footnotesize$[0.29,0.42]$} & $0.41$ \;{\footnotesize$[0.34,0.66]$} & $0.41$ & $0.0041$ \\
    \bottomrule
  \end{tabular}
\end{table}

\section{Per-model learning-rate frontiers}
\label{app:frontiers}

The full learning-rate response curves underlying \S\ref{subsec:law}. For legibility the six Qwen
sizes are split into small (0.6--4B) and large (8--32B) grids; Llama's three fit one grid. Each
panel plots validation NLL against learning rate, one line per global batch, with LoRA solid and
the FullFT reference arm dashed on its own (lower) grid, the within-$0.01$-nat LoRA band shaded,
the recommended learning rates marked ($10^{-3}$ LoRA dashed, $3\!\times\!10^{-5}$ FullFT
dotted), and a $\times$ marking grid-edge optima.

\begin{figure}[htbp]
  \centering
  \includegraphics[width=\linewidth]{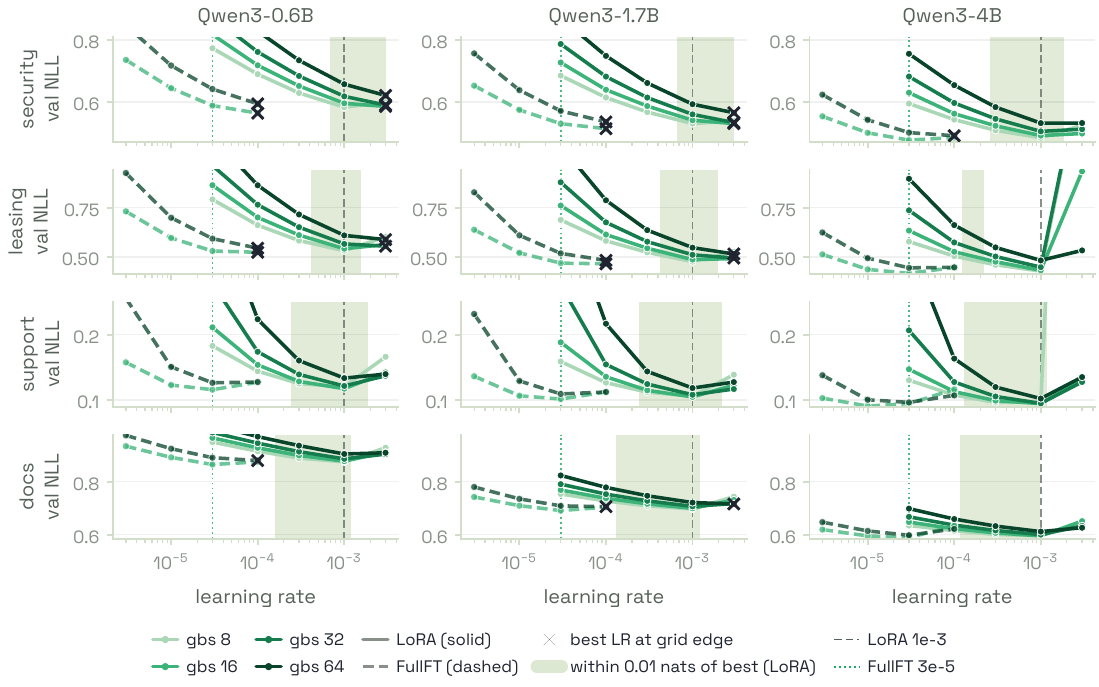}
  \caption{Learning-rate frontiers for the small Qwen3 sizes (0.6B, 1.7B, 4B), LoRA and FullFT,
  faceted by dataset (rows) and model (columns).}
  \label{fig:frontiers-qwen-small}
\end{figure}

\begin{figure}[htbp]
  \centering
  \includegraphics[width=\linewidth]{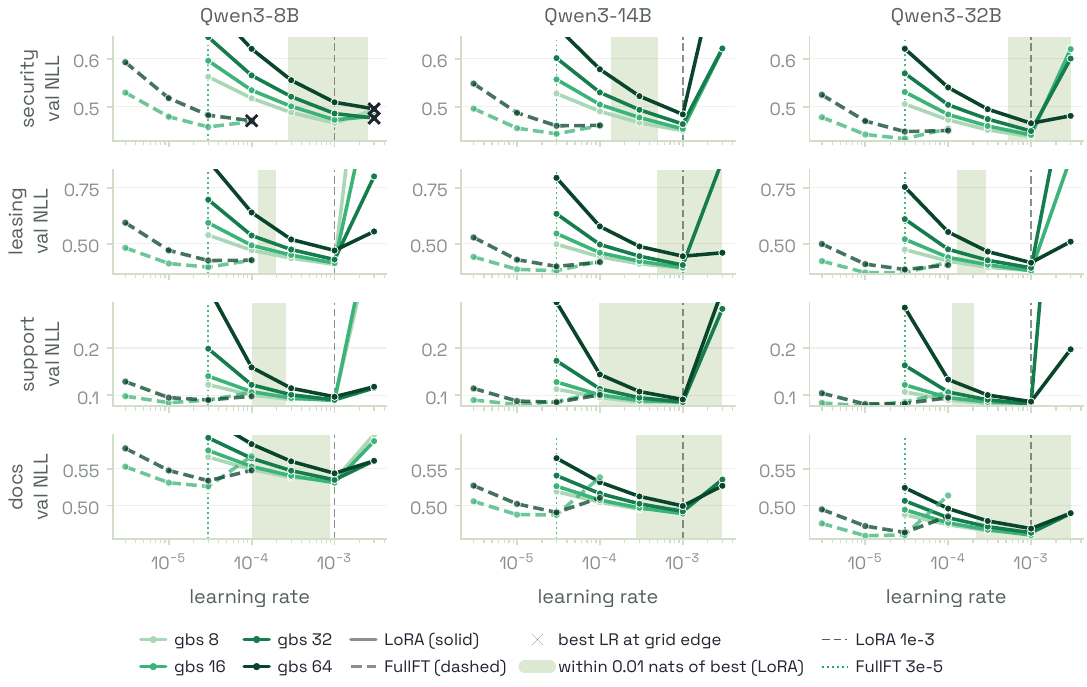}
  \caption{Learning-rate frontiers for the large Qwen3 sizes (8B, 14B, 32B), LoRA and FullFT,
  faceted by dataset (rows) and model (columns).}
  \label{fig:frontiers-qwen-large}
\end{figure}

\begin{figure}[htbp]
  \centering
  \includegraphics[width=\linewidth]{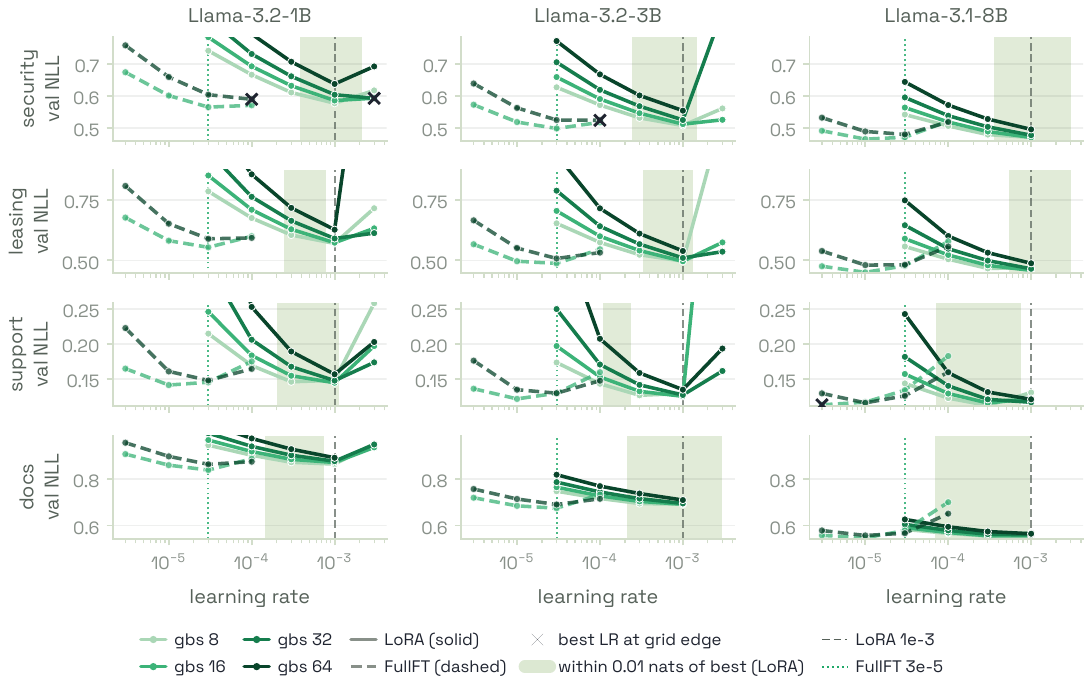}
  \caption{Learning-rate frontiers for the Llama family (3.2-1B, 3.2-3B, 3.1-8B), LoRA and
  FullFT, faceted by dataset (rows) and model (columns).}
  \label{fig:frontiers-llama}
\end{figure}

\section{Learning-rate schedule trajectories}
\label{app:lr-schedule-curves}

Figure~\ref{fig:lr-schedule} summarises the matched constant-vs-cosine schedule probe by
best-checkpoint loss and end-of-training rebound. The corresponding validation trajectories below
show the same effect directly: at $10^{-4}$ constant learning rate preserves a slightly larger step
size throughout training; at $10^{-3}$ cosine is the better calibrated default; and at
$3\!\times\!10^{-3}$ constant learning rate often reaches a plausible intermediate checkpoint and
then drifts upward late in the run.

\begin{figure}[htbp]
  \centering
  \includegraphics[width=\linewidth]{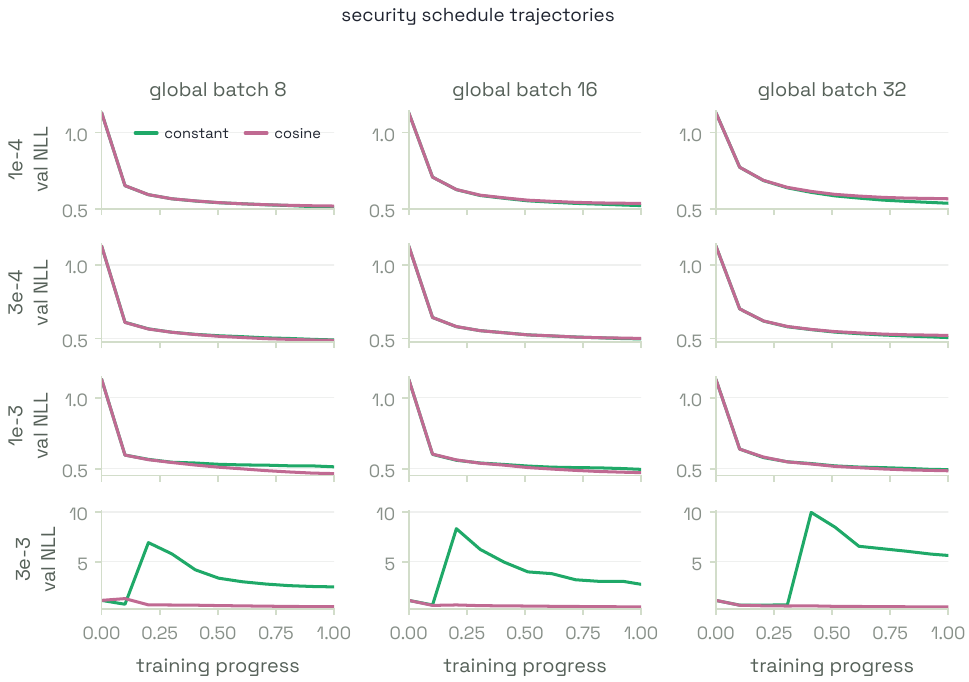}
  \caption{\textbf{Security validation trajectories for the schedule probe.} Rows are learning
  rates, columns are global batch sizes, and each panel overlays the constant and cosine schedules
  for the matched Qwen3-8B LoRA cell. The high-learning-rate constant runs rebound sharply late in
  training, which is the mechanism behind the end-gap summary in Figure~\ref{fig:lr-schedule}.}
  \label{fig:lr-schedule-curves-security}
\end{figure}

\begin{figure}[htbp]
  \centering
  \includegraphics[width=\linewidth]{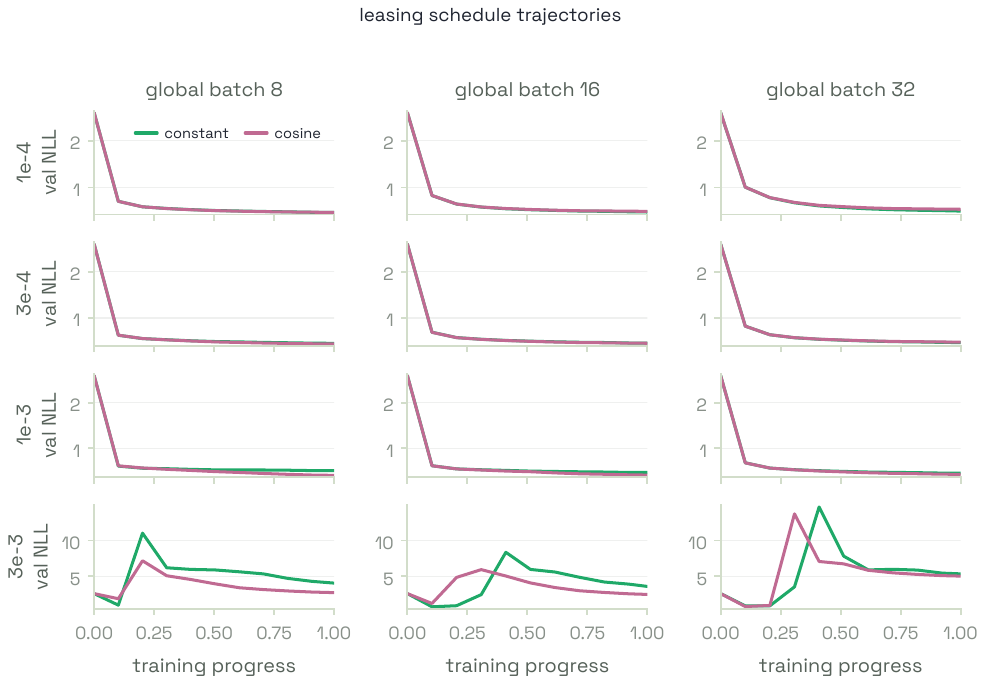}
  \caption{\textbf{Leasing validation trajectories for the schedule probe.} The same matched
  grid shows why constant learning rate is not a better default: the low-LR cells improve slightly,
  but the calibrated $10^{-3}$ cells favour cosine and the $3\!\times\!10^{-3}$ cells expose
  late-run instability.}
  \label{fig:lr-schedule-curves-leasing}
\end{figure}

%% file: sections/appendix_hessian.tex
\section{Estimating the Hessian trace: a first-order identity}
\label{app:hessian}

\S\ref{sec:downstream} uses the flatness of the converged SFT loss, the trace of its Hessian, as a
second signal alongside validation loss. Computing $\operatorname{Tr}\nabla^2_\theta L$ directly is
infeasible: the Hessian is $d\times d$ with $d\sim 10^{8}$ trainable parameters and is never
formed. We instead use an identity that replaces every second derivative with an average of
squared gradient norms. The argument has three steps. Differentiating the normalisation constraint
(predicted probabilities sum to one) twice gives two zero-sum identities,
\eqref{eq:norm-identities}. The Hessian of a log-probability splits into a curvature term and an
outer-product term, \eqref{eq:logp-split}. Averaging the curvature term over tokens drawn from the
\emph{model} makes it vanish by the first identity, leaving only the outer product
(Lemma~\ref{lem:bartlett}); taking traces then turns that outer product into a squared gradient
norm (Corollary~\ref{cor:trace}). The only assumption is that the model is well fit, and
Remark~\ref{rem:regime} states what the estimator measures when it is not.

\paragraph{Setup.} Reduce each masked position to next-token classification over a vocabulary of
$c$ classes. A context $x_{-t}$ (a position $t$ and its prefix) is drawn from the data
distribution $\mathcal{D}$; the model maps it to a predicted distribution
$f_\theta(x_{-t})\in\Delta^{c-1}$, with class probabilities $p_k(\theta)=[f_\theta(x_{-t})]_k$.
The per-position loss is the negative log-likelihood of the true token,
$\ell(\theta)=-\log p_{x_t}(\theta)$, and the full loss $L(\theta)$ averages $\ell$ over
$\mathcal{D}$. Because the predicted distribution is normalised for \emph{every} $\theta$,
differentiating $\sum_{k=1}^{c} p_k(\theta)=1$ once and twice gives
\begin{equation}
  \sum_{k=1}^{c}\nabla_\theta\, p_k(\theta)=0,
  \qquad
  \sum_{k=1}^{c}\nabla^2_\theta\, p_k(\theta)=0,
  \label{eq:norm-identities}
\end{equation}
since the gradient and Hessian of the constant $1$ vanish. For a fixed class $k$ the chain rule
gives $\nabla_\theta\log p_k = \nabla_\theta p_k / p_k$, and differentiating once more with the
quotient rule yields
\begin{equation}
  \nabla^2_\theta\log p_k
  \;=\;
  \frac{\nabla^2_\theta\, p_k}{p_k}
  \;-\;
  \nabla_\theta\log p_k\,(\nabla_\theta\log p_k)^\top .
  \label{eq:logp-split}
\end{equation}
The Hessian of a log-probability splits into a second-order ``curvature of the probability'' term
and a first-order outer-product term. The derivation removes the first term and keeps the second.

\begin{lemma}[Bartlett identity for the loss Hessian]
\label{lem:bartlett}
Assume the model is in the saturation regime at $\theta$: for $\mathcal{D}$-almost every context,
its predicted conditional equals the data conditional, $f_\theta(x_{-t})=\Pr(\cdot\mid x_{-t})$
(the global-minimiser condition of \citealp[Lemma 3.2]{liu2023implicitbias}). Then the loss
Hessian equals the Fisher information,
\begin{equation}
  \nabla^2_\theta L(\theta)
  \;=\;
  \mathbb{E}_{(t,\,x_{-t})\sim\mathcal{D}}\;
  \mathbb{E}_{x_t\sim f_\theta(x_{-t})}\!\left[
    \nabla_\theta\log[f_\theta(x_{-t})]_{x_t}\,
    \big(\nabla_\theta\log[f_\theta(x_{-t})]_{x_t}\big)^\top
  \right],
  \label{eq:bartlett}
\end{equation}
where the outer expectation is over data contexts and the inner expectation samples the token from
the model's own predicted distribution.
\end{lemma}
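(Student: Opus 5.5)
Differentiate $L(\theta)=\mathbb{E}_{(t,x_{-t})\sim\mathcal{D}}\,\mathbb{E}_{x_t\sim\Pr(\cdot\mid x_{-t})}[-\log p_{x_t}(\theta)]$ twice under the expectation. This requires the usual regularity: $p_k$ twice continuously differentiable near $\theta$, bounded Hessians, and, for every class with positive data probability, $p_k>0$ in a neighbourhood, which is enough for dominated convergence. Writing the true-token expectation as a sum over classes weighted by the data conditional gives, for each context,
\[
\nabla^2_\theta\,\mathbb{E}_{x_t}[\ell]
=\sum_{k=1}^{c}\Pr(k\mid x_{-t})\Big(-\nabla^2_\theta\log p_k\Big).
\]
Next substitute the split \eqref{eq:logp-split} term by term. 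The outer-product part is $\sum_k \Pr(k\mid x_{-t})\,\nabla_\theta\log p_k(\nabla_\theta\log p_k)^\top$. The curvature part is $-\sum_k \Pr(k\mid x_{-t})\,\nabla^2_\theta p_k/p_k$.

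The saturation hypothesis is used only now, at the point $\theta$. Replacing $\Pr(k\mid x_{-t})$ by $p_k(\theta)$ for $\mathcal{D}$-almost every context turns the curvature term into $-\sum_k\nabla^2_\theta p_k$, which vanishes by the second identity in \eqref{eq:norm-identities}.

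The hypothesis must not be used before differentiating. Saturation holds at $\theta$, not in a neighbourhood, so one cannot set the weights equal to $p_k$ inside $L$ first. The weights are the fixed data probabilities, and the substitution happens only after the Hessian has been taken.

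Classes with $\Pr(k\mid x_{-t})=0=p_k(\theta)$ need a short check. In the curvature term their summand is $0\cdot\nabla^2 p_k/p_k$, which is formally undefined. I would handle this by restricting both sums to the support $S=\{k:p_k(\theta)>0\}$. On $S$ the cancellation then needs $\sum_{k\in S}\nabla^2 p_k=0$. For $k\notin S$ we have $p_k(\theta)=0$, so $p_k$ attains its minimum there and $\nabla^2 p_k(\theta)\succeq 0$. The full sum is zero, so $\sum_{k\in S}\nabla^2 p_k=-\sum_{k\notin S}\nabla^2 p_k\preceq 0$ but is not necessarily zero. This is the main obstacle. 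I would resolve it by assuming full support, $p_k(\theta)>0$ for all $k$, which softmax outputs satisfy automatically; that assumption is implicit in the lemma.

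With the curvature term gone, what remains is $\sum_k p_k\,\nabla\log p_k(\nabla\log p_k)^\top=\mathbb{E}_{x_t\sim f_\theta(x_{-t})}[\,\cdot\,]$. Taking the outer expectation over $\mathcal{D}$ yields \eqref{eq:bartlett}.
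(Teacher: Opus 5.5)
Your proposal is correct and follows the paper's proof. You differentiate the data-weighted loss, split $\nabla^2_\theta\log p_k$ via \eqref{eq:logp-split}, use saturation at $\theta$ to replace the data weights by $p_k(\theta)$ so that the curvature term collapses to $\sum_k\nabla^2_\theta p_k=0$, and keep the outer product. The extra care you add is consistent with the paper's assumptions: substituting only after differentiating matches the paper's order of steps, and full support is assumed tacitly when the paper divides by $p_k$ (softmax guarantees it), so your point that zero-probability classes would otherwise leave a positive-semidefinite remainder $\sum_{k\notin S}\nabla^2_\theta p_k$ is correct.
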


\begin{proof}
From $\ell=-\log p_{x_t}$ and \eqref{eq:logp-split},
$\nabla^2_\theta\ell = -\nabla^2_\theta p_{x_t}/p_{x_t} + \nabla\log p_{x_t}(\nabla\log p_{x_t})^\top$.
Average the first term over $x_t$ drawn from the model, for which $\Pr(x_t=k)=p_k$:
\begin{equation*}
  \mathbb{E}_{x_t}\!\left[\frac{\nabla^2_\theta\, p_{x_t}}{p_{x_t}}\right]
  \;=\;
  \sum_{k=1}^{c} p_k\,\frac{\nabla^2_\theta\, p_k}{p_k}
  \;=\;
  \sum_{k=1}^{c}\nabla^2_\theta\, p_k
  \;=\;0 ,
\end{equation*}
by \eqref{eq:norm-identities}: the sampling weight $p_k$ cancels the $1/p_k$, leaving the bare sum
of second derivatives, which is the Hessian of the constant $\sum_k p_k = 1$. The saturation
assumption is what licenses sampling $x_t$ from the model rather than from the data: under it the
two distributions agree, so this average over model samples is also the average over data, and the
cancellation applies to $L$. Without it the weights would not be $p_k$ and the term would not
vanish. Only the outer-product term survives, and taking the outer expectation over contexts gives
\eqref{eq:bartlett}. (The Fisher matrix differs from the gradient covariance by
$-\nabla L\,(\nabla L)^\top$, which vanishes at a minimiser $\nabla L=0$; hence Hessian, Fisher,
and gradient covariance coincide there \citep{bartlett1953, martens2020}.)
\end{proof}

\begin{corollary}[Trace estimator]
\label{cor:trace}
Because $\operatorname{Tr}(gg^\top)=\lVert g\rVert_2^2$ and the trace is linear, it passes through
the expectations in \eqref{eq:bartlett}:
\begin{equation}
  \operatorname{Tr}\nabla^2_\theta L(\theta)
  \;=\;
  \mathbb{E}_{(t,\,x_{-t})\sim\mathcal{D}}\;
  \mathbb{E}_{x_t\sim f_\theta(x_{-t})}\!\left[
    \big\lVert \nabla_\theta\log[f_\theta(x_{-t})]_{x_t}\big\rVert_2^2
  \right].
  \label{eq:trace}
\end{equation}
The total curvature equals the expected squared length of a single sampled-token log-probability
gradient, a quantity with no second derivatives.
\end{corollary}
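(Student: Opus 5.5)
The plan is to take the matrix identity of Lemma~\ref{lem:bartlett} as given and apply the trace to both sides of \eqref{eq:bartlett}. Two ingredients are needed. The first is the elementary identity $\operatorname{Tr}(gg^\top)=\sum_i g_i^2=\lVert g\rVert_2^2$ for any vector $g\in\mathbb{R}^d$. The second is the interchange of the trace with the two nested expectations. Writing $g=\nabla_\theta\log[f_\theta(x_{-t})]_{x_t}$, the right-hand side of \eqref{eq:bartlett} is an iterated expectation of the random matrix $gg^\top$. The trace is a finite sum of diagonal entries, each of which is a linear functional of the matrix, so
\begin{equation*}
  \operatorname{Tr}\,\mathbb{E}\mathbb{E}[gg^\top]
  =\sum_{i=1}^{d}\mathbb{E}\mathbb{E}[g_i^2]
  =\mathbb{E}\mathbb{E}\Big[\sum_{i=1}^{d} g_i^2\Big]
  =\mathbb{E}\mathbb{E}\big[\lVert g\rVert_2^2\big].
\end{equation*}
Combining this with the left-hand side $\operatorname{Tr}\nabla^2_\theta L(\theta)$ gives \eqref{eq:trace} directly.

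The step that needs a sentence of justification is the exchange of the sum over $i$ with the expectations. Because $d$ is finite, linearity of expectation suffices, provided each $\mathbb{E}[g_i^2]$ is finite. Finiteness is already implicit in Lemma~\ref{lem:bartlett}: the Hessian on the left is a finite matrix, so its diagonal entries $\mathbb{E}[g_i^2]$ are finite. The inner expectation is a finite sum over $c$ classes, and the outer one is over the data distribution $\mathcal{D}$. Since the integrand $\lVert g\rVert_2^2$ is nonnegative, Tonelli's theorem also justifies the interchange outright, without any separate integrability check.

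I expect no real obstacle, because the corollary inherits the saturation hypothesis from the lemma and adds only linear algebra. I would close with a remark on estimation. Drawing a context from $\mathcal{D}$, sampling a token from the model, and taking one backward pass yields a single unbiased sample of $\lVert g\rVert_2^2$. Averaging such samples, as in the $16\times4$ protocol of \S\ref{sec:downstream}, therefore estimates the trace without forming any second derivatives.
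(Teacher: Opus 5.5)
Your proposal is correct and matches the paper's argument: take the trace of both sides of \eqref{eq:bartlett}, use $\operatorname{Tr}(gg^\top)=\lVert g\rVert_2^2$, and pass the trace through the expectations by linearity. Your explicit justification of the interchange, from finite $d$ and nonnegativity of the integrand, is a small refinement that the paper leaves implicit.
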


\begin{remark}[What the trace measures]
\label{rem:meaning}
For $u$ uniform on the unit sphere in $\mathbb{R}^d$ and any symmetric $H$,
$\mathbb{E}_u[u^\top H u]=\operatorname{Tr}H/d$, so a second-order expansion of the loss along a
random direction gives
\begin{equation}
  \mathbb{E}_u\big[L(\theta+\varepsilon u)\big] - L(\theta)
  \;=\;
  \frac{\varepsilon^2}{2d}\,\operatorname{Tr}\nabla^2_\theta L(\theta) + O(\varepsilon^3)
  \qquad\text{at a minimiser.}
  \label{eq:directional}
\end{equation}
The per-parameter trace $\operatorname{Tr}\nabla^2 L/d$ reported in \S\ref{sec:downstream} is
therefore the mean curvature of the loss along a random direction: it measures how much the loss
rises under an isotropic perturbation of the trained parameters, so a small trace means a flat minimum.
\end{remark}

\paragraph{Estimator and implementation.} Equation \eqref{eq:trace} is a Monte-Carlo expectation
requiring, per sample, only: (i)~a masked context, (ii)~one token sampled from the model's
predicted distribution at that position, (iii)~one backward pass for the scalar $\log p_{x_t}$,
and (iv)~its squared gradient norm. There are no second derivatives (they were removed by
\eqref{eq:norm-identities}) and no per-class Jacobian, since the sampling in
Lemma~\ref{lem:bartlett} already integrated over classes. We draw $E$ examples and $P$
probe positions per example (every trace in this report uses $E=16$, $P=4$, i.e.\ $64$ samples, on
the held-out test split), choose each probe position deterministically among the assistant-label
positions (hashed by row, for reproducibility), sample the token under a fixed per-probe seed, and
accumulate $\lVert\nabla_\theta\log p_{x_t}\rVert_2^2$ from a single \texttt{autograd.grad} call
taken over the \emph{trainable} parameters; for LoRA this is the adapter subspace, so the reported
curvature is the subspace flatness whose pitfalls \S\ref{sec:downstream} discusses
\citep{flatlora2025}. The fine-tuned grid's trace was re-measured at this same $64$-sample
test-split setting to match the base-model anchor, superseding the campaign's original $4$-sample
validation-split measurement. This is far cheaper than matrix-free Hutchinson--Lanczos curvature
probes \citep{yao2020pyhessian}, which still require Hessian--vector products; here one first-order
gradient per sample suffices.

\begin{remark}[Regime caveat]
\label{rem:regime}
Strictly, \eqref{eq:trace} estimates the trace of the \emph{Fisher information}. It equals the
Hessian trace only in the saturation regime of Lemma~\ref{lem:bartlett}, a fully fit minimiser,
and is otherwise a flatness proxy rather than the exact curvature. We therefore report it as a
Fisher-trace flatness measure and read it alongside the loss.
\end{remark}

%% file: sections/appendix_optimizers.tex
\section{AdamW and Muon: update rules and implementation}
\label{app:optimizers}

This appendix states the two update rules precisely and documents the Muon implementation we use
(Megatron-Core via ms-swift), whose choices affect behaviour (\S\ref{sec:optimizers}).
The structure: Muon is built around one object, the matrix sign $\operatorname{msign}$, which has
three equivalent characterisations (Remark~\ref{rem:msign}); a Newton--Schulz iteration computes
it with matrix multiplies only, acting on the spectrum one singular value at a time
(\eqref{eq:ns}); and a pair of variational identities places AdamW and Muon in the same
steepest-descent frame under different norms, \eqref{eq:twin}, which also gives the relationship to Shampoo.

\paragraph{AdamW.} For a parameter $\theta$ with stochastic gradient $g_t$, AdamW
\citep{loshchilov2019adamw} maintains per-coordinate moments and steps by
\begin{equation}
  m_t = \beta_1 m_{t-1} + (1-\beta_1) g_t,\quad
  v_t = \beta_2 v_{t-1} + (1-\beta_2) g_t^2,\quad
  \theta_t = \theta_{t-1} - \eta\Big(\tfrac{\hat m_t}{\sqrt{\hat v_t}+\epsilon} + \lambda\,\theta_{t-1}\Big),
  \label{eq:adamw}
\end{equation}
with bias-corrected $\hat m_t,\hat v_t$ and decoupled weight decay $\lambda$. The preconditioner
$\operatorname{diag}(1/\sqrt{\hat v_t})$ is diagonal: every scalar is rescaled by its own running
gradient magnitude, independently of any matrix structure.

\paragraph{Muon.} For a 2D weight $W$ with gradient $G_t$, Muon \citep{jordan2024muon} runs
momentum SGD and then orthogonalises the resulting matrix before applying it:
\begin{equation}
  B_t = \mu B_{t-1} + G_t,\qquad
  O_t = \operatorname{msign}(B_t),\qquad
  W_t = W_{t-1} - \eta\, s\, O_t,
  \label{eq:muon}
\end{equation}
where $s$ is a scale factor (below) and $\operatorname{msign}$ is defined next.

\begin{remark}[Three views of $\operatorname{msign}$]
\label{rem:msign}
Let $B=U\Sigma V^\top$ be a singular value decomposition with $\Sigma$ invertible. The following
define the same matrix $\operatorname{msign}(B)=UV^\top$, which is $B$ with every singular value
set to $1$:
\begin{enumerate}
  \item \emph{Matrix sign.} $\operatorname{msign}(B) = B\,(B^\top B)^{-1/2}$, the matrix analogue
  of $\operatorname{sign}(x)=x/\lvert x\rvert$: the magnitude (spectrum) is divided out and the
  directions (singular vectors) are kept.
  \item \emph{Nearest semi-orthogonal matrix.}
  $\operatorname{msign}(B) = \arg\min_{O\,:\,O^\top O=I}\lVert B-O\rVert_F$, the orthogonal
  Procrustes solution: among all matrices with orthonormal columns, $UV^\top$ is closest to $B$.
  \item \emph{Steepest spectral direction.} Among updates of bounded operator norm,
  $\arg\max_{\lVert\Delta\rVert_{\mathrm{op}}\le 1}\langle B,\Delta\rangle =
  \operatorname{msign}(B)$: it is the direction that extracts the most first-order decrease per
  unit of spectral norm.
\end{enumerate}
\end{remark}

\paragraph{Computing it.} Rather than form an SVD, $\operatorname{msign}$ is approximated by a few
steps of a Newton--Schulz iteration on the spectrally normalised buffer,
\begin{equation}
  X \leftarrow a\,X + b\,(X X^\top)X + c\,(X X^\top)^2 X .
  \label{eq:ns}
\end{equation}
Every iterate shares $B$'s singular vectors, so \eqref{eq:ns} acts on each singular value
independently through the scalar map $\sigma \mapsto a\sigma + b\sigma^3 + c\sigma^5$. The fixed
odd-polynomial coefficients $(a,b,c)$ make $\sigma=1$ an attracting fixed point of that map, and
the initial normalisation places every singular value inside its basin; $k$ iterations therefore
flatten the whole spectrum towards $1$ using only matrix multiplies, which run stably on tensor
cores.

\paragraph{One frame for both optimisers.} The two updates are steepest descent under different
norms \citep{bernstein2024modular}:
\begin{equation}
  \arg\max_{\lVert\Delta\rVert_{\infty}\le\eta}\langle G,\Delta\rangle
  \;=\; \eta\,\operatorname{sign}(G),
  \qquad
  \arg\max_{\lVert\Delta\rVert_{\mathrm{op}}\le\eta}\langle G,\Delta\rangle
  \;=\; \eta\,\operatorname{msign}(G).
  \label{eq:twin}
\end{equation}
AdamW is approximately the left identity (sign descent under the elementwise max-norm, smoothed by
its moments), and Muon is exactly the right one. The spectral norm is the natural choice for a
weight matrix: for a layer $y=Wx$, an update $\Delta$ changes outputs by at most
$\lVert\Delta\rVert_{\mathrm{op}}\lVert x\rVert_2$, so bounding the operator norm bounds the
worst-case change in the layer's function. This frame also gives the relationship to Shampoo
\citep{gupta2018shampoo}, whose update is $P_L^{-1/4} G\, P_R^{-1/4}$ with left and right
preconditioners accumulating $GG^\top$ and $G^\top G$. In the single-gradient
(zero-accumulation) limit, $P_L=GG^\top=U\Sigma^2U^\top$ and $P_R=G^\top G=V\Sigma^2V^\top$, so
\begin{equation}
  P_L^{-1/4}\, G\, P_R^{-1/4}
  \;=\; U\Sigma^{-1/2}U^\top \cdot U\Sigma V^\top \cdot V\Sigma^{-1/2}V^\top
  \;=\; U\Sigma^{-1/2}\,\Sigma\,\Sigma^{-1/2}V^\top
  \;=\; UV^\top ,
  \label{eq:shampoo}
\end{equation}
identical to Muon's orthogonalised step (``Spectral Descent''). The optimisers agree on the
instantaneous geometry and differ in how second-moment information is accumulated over steps,
which is what the June-2026 ``is Muon just Shampoo?'' exchange concerned (\S\ref{sec:optimizers}). SOAP \citep{vyas2024soap}, which runs Adam inside Shampoo's eigenbasis,
sits between the two.

\paragraph{Implementation.} Megatron-Core's \texttt{TensorParallelMuon} applies Newton--Schulz
orthogonalisation on its orthogonalised-optimiser base \citep{nvidia2026megatronmuon}.
The ms-swift Megatron backend exposes the \texttt{muon} and
\texttt{dist\_muon} options and requires Megatron-Core 0.16 or newer
\citep{msswift2026megatron}. The exposed controls and their defaults are:

\begin{table}[htbp]
  \footnotesize
  \centering
  \caption{Muon controls in Megatron-Core / ms-swift and their defaults.}
  \label{tab:muon-impl}
  \begin{tabular}{lll}
    \toprule
    Parameter & Default & Role \\
    \midrule
    \texttt{optimizer} & \texttt{adam} & one of \texttt{adam, sgd, muon, dist\_muon} \\
    \texttt{muon\_momentum} & $0.9$ & momentum on the pre-orthogonalisation buffer \\
    \texttt{muon\_use\_nesterov} & False & Nesterov-style momentum \\
    \texttt{muon\_num\_ns\_steps} & $5$ & Newton--Schulz iterations \eqref{eq:ns} \\
    \texttt{muon\_scale\_mode} & \texttt{spectral} & update rescaling: \texttt{spectral / unit\_rms\_norm / shape\_scaling} \\
    \texttt{muon\_extra\_scale\_factor} & $1$ & extra multiplier on the update \\
    \texttt{muon\_fp32\_matmul\_prec} & \texttt{medium} & precision of the Newton--Schulz matmuls \\
    \texttt{muon\_tp\_mode} & \texttt{blockwise} & TP orthogonalisation: \texttt{blockwise / duplicated / distributed} \\
    \texttt{muon\_split\_qkv} & True & split a fused QKV projection before orthogonalising \\
    \bottomrule
  \end{tabular}
\end{table}

Three choices warrant comment. \emph{(i) Tensor-parallel orthogonalisation.}
$\operatorname{msign}$ is a global matrix operation, but a tensor-parallel weight is sharded
across ranks; \texttt{muon\_tp\_mode} decides how to reconcile the two. \texttt{blockwise} (the
default) orthogonalises each rank's shard independently (cheapest, approximate);
\texttt{duplicated} all-gathers the full matrix, orthogonalises it exactly, and re-shards;
\texttt{distributed} runs the Newton--Schulz iteration across ranks (exact, communication-heavy).
The mode trades orthogonalisation fidelity against communication, with no analogue in
AdamW. \emph{(ii) Fused-QKV splitting.} A fused QKV projection is one matrix in memory but three
logically distinct maps; with \texttt{muon\_split\_qkv} it is split so $W_Q,W_K,W_V$ are each
orthogonalised separately. \emph{(iii) The hybrid rule.} Orthogonalisation is meaningless for
vectors, so Muon is applied only to 2D hidden weights; one-dimensional parameters (norms, biases),
the embeddings, and the output head are optimised by Adam, the standard Muon practice. The
\texttt{spectral} scale mode additionally rescales the orthogonal update so its root-mean-square
magnitude matches an Adam step, letting a single learning rate transfer between the two arms. The
Kimi-K2 ``MuonClip'' variant, which clips attention logits for stability, is also available. Under
LoRA, Muon orthogonalises the 2D adapter matrices, so the optimised subspace matches the curvature
scope of Appendix~\ref{app:hessian}.

\begin{figure}[htbp]
  \centering
  \includegraphics[width=\linewidth]{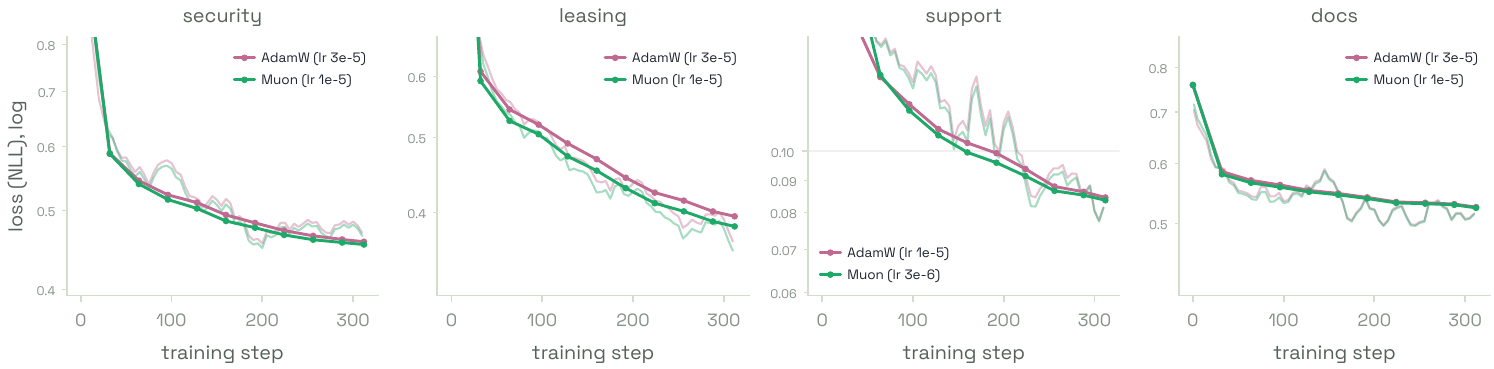}
  \caption{\textbf{Zoomed to the converged region, the validation trace separates: Muon ends at or just below AdamW on every dataset.} Validation NLL (markers, the selection metric) over training, with the smoothed per-step training loss faint behind, for each arm's best cell per dataset. The $y$-axis is logarithmic and clipped to the convergence band, so the steep early transient sits above the frame.}
  \label{fig:optim-curves}
\end{figure}

%% file: references.bib
@article{kaplan2020scaling,
  title   = {Scaling Laws for Neural Language Models},
  author  = {Kaplan, Jared and McCandlish, Sam and Henighan, Tom and others},
  journal = {arXiv preprint arXiv:2001.08361},
  year    = {2020}
}

@article{hoffmann2022chinchilla,
  title   = {Training Compute-Optimal Large Language Models},
  author  = {Hoffmann, Jordan and Borgeaud, Sebastian and Mensch, Arthur and others},
  journal = {arXiv preprint arXiv:2203.15556},
  year    = {2022}
}

@misc{thinkingmachines2025lora,
  title        = {{LoRA} Without Regret},
  author       = {{Thinking Machines Lab}},
  year         = {2025},
  howpublished = {Blog post, \url{https://thinkingmachines.ai/blog/lora/}},
  note         = {Accessed 2025}
}

@inproceedings{liu2023implicitbias,
  title     = {Same Pre-training Loss, Better Downstream: Implicit Bias Matters for Language Models},
  author    = {Liu, Hong and Xie, Sang Michael and Li, Zhiyuan and Ma, Tengyu},
  booktitle = {Proceedings of the 40th International Conference on Machine Learning (ICML)},
  series    = {PMLR},
  volume    = {202},
  year      = {2023}
}

@inproceedings{schaeffer2023mirage,
  title     = {Are Emergent Abilities of Large Language Models a Mirage?},
  author    = {Schaeffer, Rylan and Miranda, Brando and Koyejo, Sanmi},
  booktitle = {Advances in Neural Information Processing Systems (NeurIPS)},
  year      = {2023}
}

@article{gadre2024scaling,
  title   = {Language Models Scale Reliably with Over-training and on Downstream Tasks},
  author  = {Gadre, Samir Yitzhak and Smyrnis, Georgios and Shankar, Vaishaal and Wortsman, Mitchell and others},
  journal = {arXiv preprint arXiv:2403.08540},
  year    = {2024}
}

@article{owen2024predictable,
  title   = {How Predictable Is Language Model Benchmark Performance?},
  author  = {Owen, David},
  journal = {arXiv preprint arXiv:2401.04757},
  year    = {2024}
}

@inproceedings{saunshi2021mathematical,
  title     = {A Mathematical Exploration of Why Language Models Help Solve Downstream Tasks},
  author    = {Saunshi, Nikunj and Malladi, Sadhika and Arora, Sanjeev},
  booktitle = {International Conference on Learning Representations (ICLR)},
  year      = {2021}
}

@inproceedings{lourie2025unreliable,
  title     = {Scaling Laws Are Unreliable for Downstream Tasks: A Reality Check},
  author    = {Lourie, Nicholas and Hu, Michael Y. and Cho, Kyunghyun},
  booktitle = {Findings of the Association for Computational Linguistics: EMNLP 2025},
  year      = {2025}
}

@article{isik2024scaling,
  title   = {Scaling Laws for Downstream Task Performance of Large Language Models},
  author  = {Isik, Berivan and Ponomareva, Natalia and Hazimeh, Hussein and Paparas, Dimitris and
             Vassilvitskii, Sergei and Koyejo, Sanmi},
  journal = {arXiv preprint arXiv:2402.04177},
  year    = {2024}
}

@inproceedings{li2022sgdzeroloss,
  title     = {What Happens after {SGD} Reaches Zero Loss? A Mathematical Framework},
  author    = {Li, Zhiyuan and Wang, Tianhao and Arora, Sanjeev},
  booktitle = {International Conference on Learning Representations (ICLR)},
  year      = {2022}
}

@inproceedings{damian2021labelnoise,
  title     = {Label Noise {SGD} Provably Prefers Flat Global Minimizers},
  author    = {Damian, Alex and Ma, Tengyu and Lee, Jason D.},
  booktitle = {Advances in Neural Information Processing Systems (NeurIPS)},
  year      = {2021}
}

@inproceedings{smith2021origin,
  title     = {On the Origin of Implicit Regularization in Stochastic Gradient Descent},
  author    = {Smith, Samuel L. and Dherin, Benoit and Barrett, David G. T. and De, Soham},
  booktitle = {International Conference on Learning Representations (ICLR)},
  year      = {2021}
}

@article{bartlett1953,
  title   = {Approximate Confidence Intervals. {II}. More than One Unknown Parameter},
  author  = {Bartlett, M. S.},
  journal = {Biometrika},
  volume  = {40},
  number  = {3/4},
  pages   = {306--317},
  year    = {1953}
}

@article{martens2020,
  title   = {New Insights and Perspectives on the Natural Gradient Method},
  author  = {Martens, James},
  journal = {Journal of Machine Learning Research},
  volume  = {21},
  number  = {146},
  pages   = {1--76},
  year    = {2020}
}

@inproceedings{yao2020pyhessian,
  title     = {{PyHessian}: Neural Networks Through the Lens of the Hessian},
  author    = {Yao, Zhewei and Gholami, Amir and Keutzer, Kurt and Mahoney, Michael W.},
  booktitle = {IEEE International Conference on Big Data},
  year      = {2020}
}

@inproceedings{andriushchenko2023modern,
  title     = {A Modern Look at the Relationship between Sharpness and Generalization},
  author    = {Andriushchenko, Maksym and Croce, Francesco and M{\"u}ller, Maximilian and
               Hein, Matthias and Flammarion, Nicolas},
  booktitle = {Proceedings of the 40th International Conference on Machine Learning (ICML)},
  year      = {2023}
}

@article{granziol2020false,
  title   = {Flatness is a False Friend},
  author  = {Granziol, Diego},
  journal = {arXiv preprint arXiv:2006.09091},
  year    = {2020}
}

@inproceedings{dinh2017sharp,
  title     = {Sharp Minima Can Generalize For Deep Nets},
  author    = {Dinh, Laurent and Pascanu, Razvan and Bengio, Samy and Bengio, Yoshua},
  booktitle = {Proceedings of the 34th International Conference on Machine Learning (ICML)},
  year      = {2017}
}

@inproceedings{flatlora2025,
  title     = {Flat-{LoRA}: Low-Rank Adaptation over a Flat Loss Landscape},
  author    = {Li, Tao and others},
  booktitle = {Proceedings of the 42nd International Conference on Machine Learning (ICML)},
  year      = {2025}
}

@inproceedings{zhang2024finetuning,
  title     = {When Scaling Meets {LLM} Finetuning: The Effect of Data, Model and Finetuning Method},
  author    = {Zhang, Biao and Liu, Zhongtao and Cherry, Colin and Firat, Orhan},
  booktitle = {International Conference on Learning Representations (ICLR)},
  year      = {2024}
}

@article{kalajdzievski2023rslora,
  title   = {A Rank Stabilization Scaling Factor for Fine-Tuning with {LoRA}},
  author  = {Kalajdzievski, Damjan},
  journal = {arXiv preprint arXiv:2312.03732},
  year    = {2023}
}

@article{biderman2024lora,
  title   = {{LoRA} Learns Less and Forgets Less},
  author  = {Biderman, Dan and Portes, Jacob and Gonzalez Ortiz, Jose Javier and Paul, Mansheej and
             Greengard, Philip and Jennings, Connor and King, Daniel and Havens, Sam and
             Chiley, Vitaliy and Frankle, Jonathan and Blakeney, Cody and Cunningham, John P.},
  journal = {Transactions on Machine Learning Research (TMLR)},
  year    = {2024}
}

@inproceedings{muennighoff2023dataconstrained,
  title     = {Scaling Data-Constrained Language Models},
  author    = {Muennighoff, Niklas and Rush, Alexander M. and Barak, Boaz and Le Scao, Teven and
               Piktus, Aleksandra and Tazi, Nouamane and Pyysalo, Sampo and Wolf, Thomas and Raffel, Colin},
  booktitle = {Advances in Neural Information Processing Systems (NeurIPS)},
  year      = {2023}
}

@article{choshen2024hitchhiker,
  title   = {A Hitchhiker's Guide to Scaling Law Estimation},
  author  = {Choshen, Leshem and Zhang, Yang and Andreas, Jacob},
  journal = {arXiv preprint arXiv:2410.11840},
  year    = {2024}
}

@inproceedings{caballero2022broken,
  title     = {Broken Neural Scaling Laws},
  author    = {Caballero, Ethan and Gupta, Kshitij and Rish, Irina and Krueger, David},
  booktitle = {International Conference on Learning Representations (ICLR)},
  year      = {2023}
}

@article{besiroglu2024chinchilla,
  title   = {Chinchilla Scaling: A Replication Attempt},
  author  = {Besiroglu, Tamay and Erdil, Ege and Barnett, Matthew and You, Josh},
  journal = {arXiv preprint arXiv:2404.10102},
  year    = {2024}
}

@inproceedings{porian2024resolving,
  title     = {Resolving Discrepancies in Compute-Optimal Scaling of Language Models},
  author    = {Porian, Tomer and Wortsman, Mitchell and Jitsev, Jenia and Schmidt, Ludwig and Carmon, Yair},
  booktitle = {Advances in Neural Information Processing Systems (NeurIPS)},
  year      = {2024}
}

@article{bahri2024explaining,
  title   = {Explaining Neural Scaling Laws},
  author  = {Bahri, Yasaman and Dyer, Ethan and Kaplan, Jared and Lee, Jaehoon and Sharma, Utkarsh},
  journal = {Proceedings of the National Academy of Sciences (PNAS)},
  year    = {2024}
}

@inproceedings{loshchilov2019adamw,
  title     = {Decoupled Weight Decay Regularization},
  author    = {Loshchilov, Ilya and Hutter, Frank},
  booktitle = {International Conference on Learning Representations (ICLR)},
  year      = {2019}
}

@inproceedings{gupta2018shampoo,
  title     = {Shampoo: Preconditioned Stochastic Tensor Optimization},
  author    = {Gupta, Vineet and Koren, Tomer and Singer, Yoram},
  booktitle = {Proceedings of the 35th International Conference on Machine Learning (ICML)},
  year      = {2018}
}

@article{vyas2024soap,
  title   = {{SOAP}: Improving and Stabilizing Shampoo using Adam},
  author  = {Vyas, Nikhil and Morwani, Depen and Zhao, Rosie and Shapira, Itai and
             Brandfonbrener, David and Janson, Lucas and Kakade, Sham},
  journal = {arXiv preprint arXiv:2409.11321},
  year    = {2024}
}

@misc{jordan2024muon,
  title        = {Muon: An Optimizer for Hidden Layers in Neural Networks},
  author       = {Jordan, Keller and Jin, Yuchen and Boza, Vlado and You, Jiacheng and
                  Cesista, Franz and Newhouse, Laker and Bernstein, Jeremy},
  year         = {2024},
  howpublished = {Blog post, \url{https://kellerjordan.github.io/posts/muon/}}
}

@article{liu2025muonscalable,
  title   = {Muon is Scalable for {LLM} Training},
  author  = {Liu, Jingyuan and Su, Jianlin and others},
  journal = {arXiv preprint arXiv:2502.16982},
  year    = {2025}
}

@article{essential2025muon,
  title   = {Practical Efficiency of Muon for Pretraining},
  author  = {{Essential AI}},
  journal = {arXiv preprint arXiv:2505.02222},
  year    = {2025}
}

@article{bernstein2024modular,
  title   = {Old Optimizer, New Norm: An Anthology},
  author  = {Bernstein, Jeremy and Newhouse, Laker},
  journal = {arXiv preprint arXiv:2409.20325},
  year    = {2024}
}

@misc{jordan2026muonshampoo,
  title        = {Is Muon just Shampoo? Benchmark thread},
  author       = {Jordan, Keller},
  year         = {2026},
  howpublished = {Post on X, \url{https://x.com/kellerjordan0/status/2064062891607888058}},
  note         = {June 8, 2026}
}

@misc{anil2026muonshampoo,
  title        = {On the Muon--Shampoo comparison},
  author       = {Anil, Rohan},
  year         = {2026},
  howpublished = {Post on X, \url{https://x.com/_arohan_/status/2064216022232826080}},
  note         = {June 9, 2026}
}

@misc{nvidia2026megatronmuon,
  title        = {Megatron-Core Muon Optimizer},
  author       = {{NVIDIA}},
  year         = {2026},
  howpublished = {\url{https://github.com/NVIDIA/Megatron-LM/blob/dev/megatron/core/optimizer/muon.py}}
}

@misc{msswift2026megatron,
  title        = {Megatron-SWIFT: Command-line Parameters},
  author       = {{ModelScope}},
  year         = {2026},
  howpublished = {\url{https://github.com/modelscope/ms-swift}}
}

@article{goyal2017minibatch,
  title   = {Accurate, Large Minibatch {SGD}: Training {ImageNet} in 1 Hour},
  author  = {Goyal, Priya and Doll{\'a}r, Piotr and Girshick, Ross and Noordhuis, Pieter and
             Wesolowski, Lukasz and Kyrola, Aapo and Tulloch, Andrew and Jia, Yangqing and He, Kaiming},
  journal = {arXiv preprint arXiv:1706.02677},
  year    = {2017}
}

@article{mccandlish2018largebatch,
  title   = {An Empirical Model of Large-Batch Training},
  author  = {McCandlish, Sam and Kaplan, Jared and Amodei, Dario and {OpenAI Dota Team}},
  journal = {arXiv preprint arXiv:1812.06162},
  year    = {2018}
}

@article{shallue2019dataparallelism,
  title   = {Measuring the Effects of Data Parallelism on Neural Network Training},
  author  = {Shallue, Christopher J. and Lee, Jaehoon and Antognini, Joseph and Sohl-Dickstein, Jascha and
             Frostig, Roy and Dahl, George E.},
  journal = {Journal of Machine Learning Research},
  volume  = {20},
  number  = {112},
  pages   = {1--49},
  year    = {2019}
}

@inproceedings{yang2021mup,
  title     = {Tuning Large Neural Networks via Zero-Shot Hyperparameter Transfer},
  author    = {Yang, Greg and Hu, Edward J. and Babuschkin, Igor and Sidor, Szymon and Liu, Xiaodong and
               Farhi, David and Ryder, Nick and Pachocki, Jakub and Chen, Weizhu and Gao, Jianfeng},
  booktitle = {Advances in Neural Information Processing Systems (NeurIPS)},
  year      = {2021}
}

@inproceedings{smith2018increasebatch,
  title     = {Don't Decay the Learning Rate, Increase the Batch Size},
  author    = {Smith, Samuel L. and Kindermans, Pieter-Jan and Ying, Chris and Le, Quoc V.},
  booktitle = {International Conference on Learning Representations (ICLR)},
  year      = {2018}
}

@article{du2024emergentloss,
  title   = {Understanding Emergent Abilities of Language Models from the Loss Perspective},
  author  = {Du, Zhengxiao and Zeng, Aohan and Dong, Yuxiao and Tang, Jie},
  journal = {arXiv preprint arXiv:2403.15796},
  year    = {2024}
}

@article{zhou2023ifeval,
  title={Instruction-Following Evaluation for Large Language Models},
  author={Zhou, Jeffrey and Lu, Tianjian and Mishra, Swaroop and Brahma, Siddhartha and Basu, Sujoy and Luan, Yi and Zhou, Denny and Hou, Le},
  journal={arXiv preprint arXiv:2311.07911},
  year={2023}
}

@misc{oneill2025isft,
  title        = {Iterative {SFT} ({iSFT}): Dense Reward Learning},
  author       = {O'Neill, Charles and Liu, Jonathon and Partridge, Harry and Kirkby, Max and Jayasekara, Mudith},
  year         = {2025},
  howpublished = {Baseten Research},
  note         = {\url{https://www.baseten.co/research/iterative-sft/}}
}
